\newtheorem{theorem}{Theorem}
\newtheorem{lemma}{Lemma}
\newtheorem{corollary}{Corollary}
\newtheorem{prop}{Proposition}
\newtheorem{defn*}{Definition}
\newcounter{algoline}
\newcommand{\blind}{1}
\providecommand{\tabularnewline}{\\}
\newcommand{\hrluonote}[1]{{\color{blue}[ #1 -- HL ]}}
\newcommand{\jsnote}[1]{{\color{magenta}[ #1 -- JS ]}}
\newlength\myindent
\newcommand{\centered}[1]{\begin{tabular}{l} #1 \end{tabular}}
\begin{document}

\def\spacingset#1{\renewcommand{\baselinestretch}{#1}\small\normalsize} \spacingset{1}

\if1\blind
{
  \title{\bf Nonparametric Multi-shape Modeling with Uncertainty Quantification}%
  \author{Hengrui Luo\\
Lawrence Berkeley National Laboratory\\
and \\
    Justin D. Strait\\
Los Alamos National Laboratory}
\date{}
\maketitle
  } \fi

\if0\blind
{
  \bigskip
  \bigskip
  \bigskip
  \begin{center}
    {\LARGE\bf Nonparametric Multi-shape Modeling with Uncertainty Quantification}
\end{center}
  \medskip
} \fi

\begin{abstract}
Modeling and uncertainty quantification of closed curves is an important problem in the field of shape analysis, and can have significant ramifications for subsequent statistical tasks. Characterization of dependence for closed manifolds is a challenging task. Furthermore, many shape analysis tasks involve collections of closed curves, which often exhibit structural similarities at multiple levels. Modeling multiple closed curves in a way that efficiently incorporates such between-curve dependence remains a challenging problem.
In this work, we propose and investigate a multiple-output, 
multi-dimensional Gaussian process modeling framework. We illustrate the proposed methodological advances, and demonstrate the utility of meaningful uncertainty quantification, on several curve and shape-related tasks. This model-based approach not only addresses the problem of inference on closed curves (and their shapes) through kernel constructions, but also opens doors to nonparametric modeling of multi-level dependence for functional objects in general.
\end{abstract}

\noindent {\it Keywords:}  Gaussian processes, statistical shape analysis, uncertainty quantification, functional data analysis

\spacingset{1.9} 

\section{Problem Formulation}
\label{sec:Introduction}
\subsection{Background}
Shape data are ubiquitous in modern applications, often extracted as object boundaries from images. Typically, shapes are either represented as finite point sets \citep{kendall_shape,dryden2016statistical} or planar curves \citep{srivastava2016functional}.
Geometrically, planar curves are either open (i.e., there exist uniquely-defined starting and ending points) or closed (i.e., any arbitrarily defined starting point on the curve is also its ending point). %
The focus of this paper is on using noisy realizations of points to estimate underlying closed planar curves 
via Gaussian process (GP) models, and its ensuing impact on various shape analysis tasks.%

Closed curves can be viewed as a type of multi-dimensional functional data, with closed input domains \citep{olsen2018simultaneous}. %
Modern shape analysis models characterize variation across curves, yet tend to ignore the uncertainty in curve fitting to observed realizations, as these points are generally assumed to be noise-free, \emph{dense} samples along the curve \citep{kurtek2012statistical,huang2015riemann}. Application of these models to noisy dense samples (e.g., points extracted from object boundaries in images) or sparse samples (e.g., expert landmark annotations) may not properly account for this additional uncertainty (Supplementary Material \ref{sec:ill_param}). Unfortunately, current approaches to uncertainty quantification (UQ) of such complex geometric objects focus primarily on either open manifolds without boundaries, or univariate functional data (e.g., fMRI time series \citep{luo2021topologicaltransfer}). UQ has been less extensively studied for curves parameterized on closed manifolds, e.g., $\mathcal{S}^1$.
%

%
%

%
%
%
%

\subsection{Challenges for Modeling Approaches}
\label{sec:Challenges}

Both univariate functional data and closed planar curves can be viewed as parametric functions, where points in $d$-dimensional Euclidean space ($d\geq 1$) are indexed by a parameter value. 
Although there is an intrinsic relationship between closed curves %
and univariate functions \citep{olsen2018simultaneous,li2017bayesian}, there are 
two major considerations required to extend functional data models to closed curves:

\textbf{\emph{Closedness.}} The input domain for a closed curve is the unit circle $\mathcal{S}^1$, since any arbitrarily chosen starting point on it is also its ending point, unlike  an open 
curve, where the domain is $\mathbb{R}$ (or a subset). %
Parameterizations on $\mathcal{S}^1$ present modeling challenges, %
as further assumptions are required to respect the closed nature of curves (which we deem \emph{consistency}).
Closed manifold models which ignore boundaries can introduce non-trivial numerical and consistency issues. We address this for GP models by defining a periodic covariance kernel \citep{duvenaud2014automatic}, carefully restricting the
relationship between its length-scale and periodicity hyperparameters to ensure consistency in estimation.

\textbf{\emph{Dependence.}} %
GP models for functional data analysis \citep{shi2011gaussian,wang2016functional,greven2017general,olsen2018simultaneous} usually consider univariate functions with monotonic parameterizations. For closed planar curves, %
models should be specified simultaneously within a single coordinate (one-dimensional output) on both $x$- and $y$-coordinates (two-dimensional output) as functions of one shared parameter (one-dimensional input), with additional dependence accounted for between curves in the presence of
multiple curves. %
We account for this by designing a multi-level, multiple-output kernel, adapting the sampling scheme to closed curves. %
%

%
Statistical approaches to modeling closed curves are either model-based or model-free.  A \emph{model-based approach} defines a generative, probabilistic model. 
Some examples of model-based approaches for functional data and curves include \citet{stocker2021functional,greven2017general,lu2017bayesian,strait2019lmk}.
This is in contrast to \emph{model-free approaches}, which do not allow probabilistic inference %
due to the lack of a data-generating mechanism. %
Examples include noise-free spline models without uncertainty %
\citep{steyer2021elastic, luo2019combining}, and
shape registration and estimation of population shape summaries (e.g., Karcher mean/variance) as in \cite{kendall_shape,srivESA}. 
These do not allow explicit modeling of the dependence between curve locations or curves.

In this paper, we propose the use of a multiple-output GP model (a model-based approach), with a novel multi-level covariance kernel, for estimation, prediction, and uncertainty quantification of \emph{planar closed curves}. 
The \emph{multi-level dependence structure} captures the dependence between points on closed curves (within-curve dependence) and structural similarity across collections of closed curves (between-curve dependence), %
GP models naturally incorporate UQ, which can be used for subsequent shape analysis tasks. 

\section{Closed Curves, Shapes, and Parameterizations}
\label{sec:ClosedCurves}

We first introduce the data format of \emph{closed curves}. %
A parameterized curve $\mathbf{f}: \mathcal{D} \rightarrow \mathbb{R}^d$ is a mapping from curve parameter domain $\mathcal{D}$ into $d$-dimensional Euclidean space. We focus on planar curves, where $d=2$, such that curve $\mathbf{f}$ has coordinate functions $f_1, f_2$ with univariate scalar outputs. %
Assume that the curve is of total length $\ell$. Open curves satisfy $\mathbf{f}(0) \neq \mathbf{f}(\ell)$, whereas closed curves satisfy $\mathbf{f}(0)=\mathbf{f}(\ell)$. %
This consistency condition means that the domain of a closed curve is diffeomorphic to $\mathcal{S}^1$.

A central question in modeling is how to parameterize closed curves. It is most common to restrict to parameterizations which are \emph{monotonic}, since %
these ensure that there is no stopping or reversal of motion as the curve is traversed. As a result, each point is uniquely identified by a parameter value on $[0,\ell)$, %
avoiding the case where the same point corresponds to two different parameter values \citep{wang2016functional,srivESA}. 
The most natural, monotonic parameterization that we consider for our proposed GP model is the \emph{arc-length parameterization}. Given a curve $\mathbf{f}$, its 
\emph{arc-length} is the function $s(t)=\int_{0}^{t}\vert \mathbf{f}'(u)\vert\ du$,
which evaluates the length of $\mathbf{f}$ up to  %
parameter value $t$. Then, %
$s=s(t)$  can be used as a new parameter, and $\mathbf{f}(s)$ is arc-length parameterized, with the form $s\mapsto(f_{1}(s),f_{2}(s))$, where $s$ varies from $0$ to the total length $\ell$. %

While closed curves do not have a natural starting or ending point, they are often provided as an ordered set of points. 
We can identify a starting point, and associate it with both $s=0$ and $s=\ell$, due to the closedness of the curve. 
Unlike equally-spaced approximation dense samples, properly computing arc-length parameter values for sparsely-sampled curves is %
of greater importance and less explored as shown by  Figure \ref{fig:twotwo_table}, along with algorithmic error bounds in Theorem \ref{thm:con_xy_arc} (Supplementary Material \ref{sec:arc-length Parameterization Algorithms}).

Curves can be parameterized in other ways. For instance, in elastic shape analysis
\citep{srivESA,kurtek2012statistical}, the problem of registering curves involves optimizing (with respect to the \emph{elastic metric}) over a group of re-parameterizations. \
Unfortunately, 
this optimization can result in unsatisfactory curve alignments, especially if sampled points are sparse or noisy (Supplementary Material \ref{sec:numerical_issues}), resulting in problematic interpretations for downstream shape analysis tasks.
Thus, we suggest using arc-length parameterizations for the proposed closed curve model fitting, as it appears to be more widely suitable regardless of sampling scheme.  We present a detailed comparison of the arc-length and ``elastic-induced'' parameterizations in Supplementary Material \ref{sec:ill_param}.

%
%
%

%
%
%
%
%

%
\section{Multiple-output Model for Closed Curves}
\label{sec:SingleCurve}

In this section, we propose a multiple-output GP model to fit a collection of closed curves, which addresses the challenges presented in Section \ref{sec:Introduction}. A basic discussion of GPs can be found in the Supplementary Material \ref{sec:SOGP} and \citet{luo2019sparse}.

\subsection{Data Structure and Notation}
\label{subsec:DataNot}

Suppose that we observe sample points from $J$ closed curves, all of which we assume are arc-length parameterized. In other words, for curve $j=1,\ldots,J$, we observe $n_j$ input-output combinations $(s_{i}^{(j)},\mathbf{y}_{i\cdot}^{(j)})$, where $s_{i}^{(j)}\in\mathbb{R}$ is the arc-length parameter value associated with point $\mathbf{y}_{i\cdot}^{( j)}=(y_{i1}^{(j)},y_{i2}^{(j)})^T\in\mathbb{R}^{2}$ (as continuous variables). This point is assumed to have been sampled from the $j$-th (unknown) underlying  curve $\mathbf{f}^{(j)}=\left(f_{1}^{(j)},f_{2}^{(j)}\right)$.
The number of observed sample points $n_j$ can vary by curve, such that the vector of arc-length parameters $\mathbf{s}^{(j)}=(s_{1}^{(j)},\ldots,s_{n_j}^{(j)})^T$ is computed with respect to curve $j$. 
Let $\mathbf{Y}^{(j)}$ contain the full two-dimensional output values for curve $j$:
\begin{align}
\mathbf{Y}^{(j)}=\begin{pmatrix}y_{11}^{(j)} & y_{12}^{(j)}\\
\vdots & \vdots\\
y_{n_j,1}^{(j)} & y_{n_j,2}^{(j)}
\end{pmatrix}=\begin{pmatrix}\mathbf{y}_{1\cdot}^{(j)T}\\
\vdots\\
\mathbf{y}_{n_j\cdot}^{(j)T}
\end{pmatrix}=\begin{pmatrix}\mathbf{y}_{\cdot1}^{(j)} & \mathbf{y}_{\cdot2}^{(j)}\end{pmatrix}\in\mathbb{R}^{n_j\times2} \,.
\label{eq:coordinate convetion}
\end{align}
To clarify notation, for curve $j$, $\mathbf{y}_{i\cdot}^{(j)}\in\mathbb{R}^{2}$ represents the Euclidean coordinates of the output at input parameter $s_{i}^{(j)}$, whereas $\mathbf{y}_{\cdot d}^{(j)}\in\mathbb{R}^{n_j}$ represents all $n_j$ output values across dimensions $d\in\{1,2\}$.
We reiterate that while the goal is to fit a closed curve, which by definition has no natural starting point, the point data used in fitting inherently is provided as an array with its first row as a ``starting point'' (i.e., $\mathbf{y}_{1\cdot}^{(j)}$). The fitted curve should be invariant to permutation of row ordering of the original point set, which we demonstrate in Figures \ref{fig:perm_sp} and \ref{fig:perm_sp2} of Supplementary Material \ref{subsec:order}.

Throughout the rest of this section, we will either refer to observed sample points from curve $j$ in matrix form, i.e., \eqref{eq:coordinate convetion}, or in the vectorized form below:
\begin{equation}
\text{vec}\left(\mathbf{Y}^{(j)}\right) = \left( \mathbf{y}_{1\cdot}^{(j)T}, \hdots, \mathbf{y}_{n_{j}\cdot}^{(j)T} \right)^T = \left( y_{11}^{(j)}, y_{12}^{(j)}, \hdots, y_{n_{j},1}^{(j)}, y_{n_{j},2}^{(j)} \right)^T \in \mathbb{R}^{2n_j} \,.
\end{equation}
This is structurally more useful and convenient when considering joint models across curves.

\subsection{Single-output Gaussian Processes for a Single Curve}
\label{subsec:Setup}

First, consider the following general model with additive noise for a single curve $f^{(j)}$:
\begin{align}
\left(\begin{array}{c}
y_{i1}^{(j)}\\
y_{i2}^{(j)}
\end{array}\right)=\left(\begin{array}{c}
f_{1}^{(j)}(s_{i}^{(j)})\\
f_{2}^{(j)}(s_{i}^{(j)})
\end{array}\right)+\left(\begin{array}{c}
\epsilon_{i1}^{(j)}\\
\epsilon_{i2}^{(j)}
\end{array}\right)\ \Longleftrightarrow \ \mathbf{y}_{i\cdot}^{(j)}=\mathbf{f}^{(j)}(s_{i}^{(j)})+\bm{\epsilon}_{i}^{(j)}\,,\label{eq:within_coords_dependence}
\end{align}
where $\boldsymbol{\epsilon}_{i\cdot}^{(j)}=(\epsilon_{i1}^{(j)},\epsilon_{i2}^{(j)})^T$ is the error vector and $\mathbf{f}^{(j)}=(f_1^{(j)},f_2^{(j)})$ is unknown. Note that
$\mathbf{f}^{(j)}$ maps one-dimensional input (i.e., arc-length) to a two-dimensional output (i.e., $x$- and $y$-coordinates). In this paper, we view curve $j$'s coordinate functions $\left(f_{1}^{(j)}(s),f_{2}^{(j)}(s)\right)$ directly as $(x,y)$ coordinates in $\mathbb{R}^{2}$. This allows uncertainty estimates to be directly interpretable as a function of location %
with respect to the coordinate system along the $x$- and $y$-axes.%

A naive way to extend single-output GPs to multiple dimensions is to fit separate, independent GPs to each output dimension (i.e., coordinate function), so that we obtain the finite-dimensional model in \eqref{eq:within_coords_dependence}.  We deem this the \emph{baseline model}: $f_{1}^{(j)},f_{2}^{(j)}$ are %
modeled by GPs with kernel functions $k_{1}^{(j)},k_{2}^{(j)}$ and $\epsilon_1^{(j)},\epsilon_2^{(j)}$ are independent Gaussian noise terms.
The dependence within the $d{\text{-th}}$ coordinate function, for $d=1,2$, is 
$\text{Cov}\left(f_{d}^{(j)}(s_i^{(j)}),f_{d}(s_p^{(j)})\right)=k_{d}^{(j)}\left(s_{i}^{(j)},s_{p}^{(j)}\right)$
for $n_j$ different sample points from the curve indexed by $i,p=1,\ldots,n_j$, which allows $k_1^{(j)}$ and $k_2^{(j)}$ to be independently specified. %
For the vector $\mathbf{f}^{(j)}(s_i^{(j)})$, the joint kernel %
produces a  $2\times 2$ covariance matrix,  
$K_{\mathbf{f}^{(j)}}(s_i^{(j)},s_p^{(j)})=\text{diag}\left( k_1^{(j)}(s_i^{(j)},s_p^{(j)}), k_2^{(j)}(s_i^{(j)},s_p^{(j)}) \right)$,
and the covariance matrix for $\bm{\epsilon}_i^{(j)}$ can be written as $\Sigma^{(j)}=\text{diag}(\sigma_{\epsilon1}^{2(j)},\sigma_{\epsilon2}^{2(j)})$. Zeros on the off-diagonals of these covariance matrices imply that there is no dependence between coordinate functions. 

Using the maximum likelihood GP model fitting procedure (discussed in Supplementary Material \ref{sec:general_numericals} and \ref{sec:restricted parameter range}), Figure \ref{fig:sogp_fit} shows a baseline model fit for a bone from the MPEG-7 dataset \citep{bai2009learning}. The left and middle panels show the predictive mean and 95\% confidence intervals %
for the estimated  $x$- and $y$-coordinate functions. The predictive mean and variances can be combined to obtain the GP model estimates for the closed curve shown in the right panel, with uncertainty (as represented by ellipsoid volumes) which shrinks near observed points %
(Supplementary Material \ref{sec:numerical_issues}).

\begin{figure}[t]
\centering
\hrule
\includegraphics[width=0.95\textwidth]{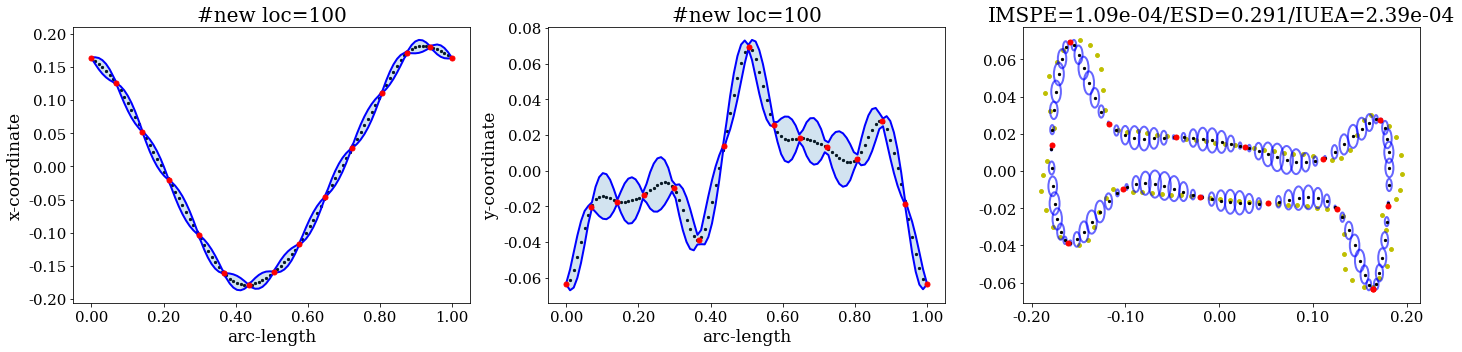}
\hrule
\caption{\textbf{Single-output GP model.} Baseline model fit for a bone curve from MPEG-7, %
observed at 15 
sample points
The first two columns show coordinate function fits, with prediction in black and 95\% pointwise prediction intervals represented as blue bands. The final column shows the overall curve fit with uncertainty ellipsoids.%
\newline \textbf{Figure conventions.} %
We display observed points in red, predictive mean in black, and points from the true curve in yellow. Uncertainty is depicted by blue ellipsoids at each prediction point, with principal axes driven by pointwise standard deviations for each coordinate. %
Metrics (IMPSE, ESD, IUEA) are reported at the top (Supplementary Material  \ref{sec:Metrics} for detailed definitions). Unless otherwise specified, all models are fit using periodic Matern 3/2 kernel with an additional constant kernel. %
}
\label{fig:sogp_fit}
\end{figure}

\subsection{Closedness and Periodic Kernels \protect\footnote{the $(j)$ superscript is dropped throughout this section for simplicity}}
\label{subsec:Closed}

As mentioned above, the challenge of using arc-length parameterization for closed curves is the following: if $\ell$ is the total length of a curve $\mathbf{f}$, then under an arc-length parameterization, the parameter value $s=\ell$ should also be mapped to the same point on the curve as that of parameter value $s=0$. %
Since the starting point can be arbitrarily chosen on $\mathcal{S}^1$ with any parameter $s$, we must take care to recognize that model fitting should be invariant to cyclic permutations of the rows of $\mathbf{Y}$, i.e., the ordering of the observed points.

One way of ensuring periodicity is to restrict our model to a single period.
An example of this is trigonometric regression \citep{eubank1990curve}, where 
trigonometric functions are used as basis functions to account for 
periodicity \citep{fisher1993statistical}. However, direct estimation of parameters associated with periodic basis functions can be difficult, %
due to an accompanying identifiability problem \citep{quinn1989estimating}. Furthermore, there is another issue with using these types of models for closed curves. When modeling univariate functional data, %
its range of parameter values can be bounded based on those corresponding to the (arbitrarily chosen) starting and ending points. The function can thus be more densely sampled within the domain without changing this range.
However, for a closed curve, parameter values are generally not known or provided. %
Since estimating a curve's total length depends on discretization, the range of the arc-length parameter will not be fixed; thus, standardization using a new total length estimate is required after every refinement. This means restricting estimation to within one period 
is impractical. Therefore, a closed curve parameterization should be monotonic (as with open curves) and also \emph{periodic}. 

Instead, we choose to handle periodicity by fitting a fully nonparametric model on the full closed manifold, adding periodic constraints. %
We do this by specifying a periodic covariance kernel \citep{duvenaud2014automatic} with appropriate constraints on hyperparameters, for the following reason. Suppose a closed curve $\mathbf{f}(s)$ is assumed to have
an extended domain $[0,\infty)$, such that the curve is re-traced
after every interval of length $\ell$. Simply coinciding starting and
ending points and restricting the model forces $\mathbf{f}(0)=\mathbf{f}(\ell)$, 
but there is no guarantee that the two identical pairs of points $\mathbf{f}(0),\mathbf{f}(\Delta)$ and $\mathbf{f}(\ell),\mathbf{f}(\ell+\Delta)$
are perfectly correlated for $\Delta>0$.
Ensuring this 
consistency in the dependence structure is important because the underlying curve $\mathbf{f}$ is unknown, meaning that its total length $\ell$ must be estimated. Based on this observation, $\mathcal{S}^{1}$ is a more appropriate modeling domain \citep{srivESA,olsen2018simultaneous}.

\begin{defn*}
(Periodic covariance kernels,   \citet{williams2006gaussian}, Sections 4.2 and B.1) The exponential periodic covariance kernel (with period $\tau$) is defined as:
\begin{align}
k_{(\sigma^{2},\rho,\tau)}(s_{i},s_{p})=\sigma^{2}\exp\left(-\frac{1}{\rho}\sin^{2}\left(\frac{\|s_{i}-s_{p}\|}{\tau/\pi}\right)\right)\,,\label{eq:periodic cov ker}
\end{align}
for $s_{i},s_{p} \in \mathcal{S}^1$ with hyperparameters $\sigma^{2}\in(0,\infty),\rho\in(0,\infty),\tau\in(0,\infty)$.
We require that 
there is an isometry between the chosen metric $\|\cdot\|$ and arc-length metric on $\mathcal{S}^1$. Other periodic covariance kernels (e.g., the Matern family) can be defined similarly.%
\label{defn:periodic_kernel}
\end{defn*}
The construction of warping by the sine function originates in circular statistics \citep{fisher1995statistical}. %
Unfortunately, $\tau$ is unidentifiable because of the presence of many local optima
stemming from the sine function: if $\hat{\tau}$ is a valid estimate of $\tau$, then estimates $\hat{\tau}+2\pi\cdot\mathbb{N}$ are also valid.
Instead of optimizing over $\tau$, we choose to empirically set $\tau$ close to the total length of the curve $\mathbf{f}$ by an estimate of the length of a piecewise-linear curve connecting the observed sample points as described in Figure \ref{fig:algorithm_xy_arc} (in Supplementary Material \ref{sec:arc-length Parameterization Algorithms}). 
As shown in Figure \ref{fig:single_shape_different_periodicities}, model fits are generally robust under this heuristic choice.
%

%
%

%
The length scale $\rho$ drives the strength of correlation between $s_{i},s_{p}$, while the period models recurrence of certain patterns. For a periodic kernel, the magnitude of correlation between two points 
depends on both the length scale and period. Lemma \ref{thm:bounds on periodic kernels} (in Supplementary Material \ref{sec:proof of main thm}) formally discusses the trade-off between these two kernel hyperparameters, but we demonstrate this with an example below. %

Figure \ref{fig:period_LS_fit} shows baseline model fits (i.e., Section \ref{subsec:Setup}) with a stationary periodic kernel to a bone. The fit is rather poor when there is no constraint on length scale and the period is 
estimated: while the predictive mean exhibits interpolating behavior, %
the period in the $x$-coordinate does not match that of the $y$-coordinate, resulting in a self-intersection on the bone's right side. Fixing the period (without length scale constraints) improves the overall fit. 
However, the length scale hyperparameter increases significantly in the $x$-coordinate, resulting in its uncertainty estimates being much smaller than those in the $y$-coordinate. If the length scale hyperparameter is constrained to be less than half of the estimated arc-length (i.e., the fixed period), %
we obtain an improved fit in which uncertainty is properly reflected in both the $x$- and $y$-coordinates. %

\begingroup
\renewcommand{\arraystretch}{0.5} %
\begin{figure}[t]
\centering

\begin{tabular}{cccc}
\toprule 
\multicolumn{1}{c}{(1)} & (2) & (3)\tabularnewline
\midrule 
\multicolumn{1}{c}{\includegraphics[width=0.3\textwidth]{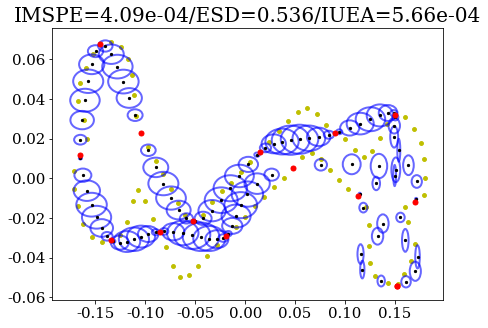}} & \includegraphics[width=0.3\textwidth]{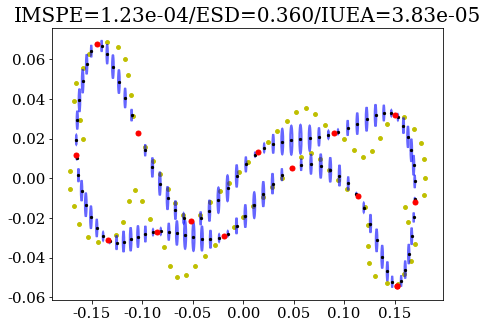} & \includegraphics[width=0.3\textwidth]{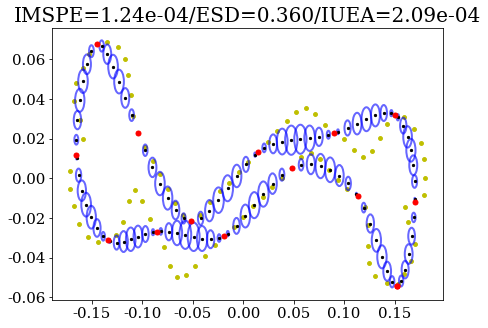}\tabularnewline
\midrule
\midrule 
LS $x=0.302$, LS $y=0.247$ & LS $x=6.80$, LS $y=0.247$ & LS $x=0.500$, LS $y=0.247$\tabularnewline
\midrule 
Per $x=0.907$, Per $y=1.00$ & Per $x=1.00$, Per $y=1.00$ & Per $x=1.00$, Per $y=1.00$\tabularnewline
\bottomrule
\end{tabular}
\caption{\textbf{Effect of length scale constraints with periodicity estimates on GP fits.} Baseline model fits for a bone curve from MPEG-7 observed at 15 points. Panel (1) %
does not constrain the length scale (LS) nor the period (Per).
Panel (2) fixes Per to the estimated arc-length, with LS unconstrained. Panel (3) fixes Per and constrains LS by $\rho\leq\tau/2$. Estimated values of Per and LS are reported for the 
coordinates $x,y$.
}%
\label{fig:period_LS_fit}
\end{figure}
\endgroup

Under mild assumptions of a closed domain, appropriate period constraints, and densely sample points on the closed curves, model consistency is guaranteed when coordinate functions of the underlying closed curve lie in a space induced by the periodic kernel. This is formally stated in Theorem \ref{thm:consistency thm} (Supplementary Material \ref{sec:proof of consistency thm}), based on results from 
\citet{koepernik2021consistency} and the approximation in Lemma \ref{thm:bounds on periodic kernels}.
Dense sampling schemes, where the distances between consecutive points converges to zero, are important for algorithmic and model consistency as the number of sample points tends to infinity. %
Finally, we note that GPs have the advantage of explicitly controlling the smoothness of the underlying curve explicitly through the choice of covariance kernel family \citep{cramer2013stationary}, which is crucial for subsequent analyses, e.g., in elastic shape analysis \citep{kurtek2012statistical}. %
Supplementary Material \ref{subsec:NonStat} discusses the ramifications of kernel choice on curve fitting under different sampling schemes and smoothness requirements. 

\subsection{Multiple-output Gaussian Processes for Multiple Curves}
\label{subsec:SingleCurveMOGP}

Even when using a periodic covariance kernel, fitting the baseline model from Section \ref{subsec:Setup} has two drawbacks: (1) it is unable to capture correlation between a fixed curve's two coordinate functions; and (2) when fit separately to a collection of $J$ curves, it implies curves have no structural similarity. It is natural to assume coordinates are correlated, since knowledge of the $x$-coordinate at arc-length parameter value $s$ can help in inferring the corresponding $y$-coordinate, particularly after accounting for dependence between different input parameter values. In addition, a collection of curves (perhaps representing similar objects) typically shares similar underlying structures, with any differences in details often occurring locally rather than globally. %
Our proposed multiple-output kernel accounts for the following sources of dependence in a collection of closed curves:
\begin{enumerate}
\item \emph{Within-coordinate dependence for a fixed curve:} This reflects the dependence between points $s_i^{(j)},s_p^{(j)} \in\mathcal{D}$ for a single fixed coordinate function $f_{d}^{(j)}$ of curve $j$, denoted by $\text{Cov}\left(f_{d}^{(j)}(s_i^{(j)}),f_{d}^{(j)}(s_p^{(j)})\right)$ for $d=1,2$.
For a closed curve, we choose to use the periodic kernel $k_{(\sigma^2,\rho,\tau)}^{(j)}(s_i^{(j)},s_p^{(j)})$ for the reasons described in the previous section.
\item \emph{Within-curve coordinate dependence for a fixed curve:} This reflects the dependence
between the coordinate functions of the curve $j$. When both coordinate functions $f_{d}^{(j)},f_{d'}^{(j)}$ are evaluated at a parameter value $s$, we denote this dependence by \newline $\text{Cov}\left(f_{d}^{(j)}(s),f_{d'}^{(j)}(s)\right)$, for $d,d' \in \{1,2\}$ with $d\neq d'$. 
\item \emph{Between-curve dependence:} This reflects the dependence between curves \newline $j,j' \in \{1,\ldots,J\}$ with $j\neq j'$, evaluated at a vector of parameter values $\bm{s}^{(j)},\bm{s}^{(j')}$. We denote this dependence by $\text{Cov}\left(\mathbf{f}^{(j)}(\bm{s}^{(j)}),\mathbf{f}^{(j')}(\bm{s}^{(j')})\right)$.
\end{enumerate}
Under this multi-level dependence structure, the first source can be directly incorporated into either single-output or multiple-output GPs. The latter two are only suitable for multiple-output GPs. 
If $J=1$, i.e., only one curve is observed, one can ignore the final source of dependence and incorporate the first two within a multiple-output model.

\textbf{Multiple-output Model for a Single Curve:} We first extend the single-output model of Section \ref{subsec:Setup} for a single curve $\mathbf{f}^{(j)}$ to incorporate within-curve coordinate dependence (2). Let $K_{\mathbf{f}^{(j)}}$ be a kernel such that for $s_i^{(j)},s_{p}^{(j)} \in \mathcal{D}$, we can express the covariance matrix between coordinate function outputs as:
\begin{equation}
K_{\mathbf{f}^{(j)}}(s_i^{(j)},s_p^{(j)})=
\begin{pmatrix}\text{Cov}\left(f_{1}^{(j)}(s_{i}^{(j)}),f_{1}^{(j)}(s_{p}^{(j)})\right) \ & \ \text{Cov}\left(f_{1}^{(j)}(s_{i}^{(j)}),f_{2}^{(j)}(s_{p}^{(j)})\right)\\
\text{Cov}\left(f_{2}^{(j)}(s_{i}^{(j)}),f_{1}^{(j)}(s_{p}^{(j)})\right) \ & \ \text{Cov}\left(f_{2}^{(j)}(s_{i}^{(j)}),f_{2}^{(j)}(s_{p}^{(j)})\right)
\end{pmatrix}.
\label{eq:MOGPmatrix}
\end{equation}
Thus, $K_{\mathbf{f}^{(j)}}$ is identified by specifying
a general expression for $\text{Cov}\left(f_{d}^{(j)}(s_{i}^{(j)}),f_{d'}^{(j)}(s_{p}^{(j)})\right)$, with $d,d'\in\{1,2\}$. A common approach is to specify a \emph{separable kernel}: define $K_{\mathbf{f}^{(j)}}=k^{(j)}\otimes k_D^{(j)}$ 
as the Kronecker product of the scalar kernels $k^{(j)}$ and $k_{D}^{(j)}$, where $k^{(j)}$ is the input kernel (i.e., chosen to be a periodic stationary kernel as in Section \ref{subsec:Closed}) and $k_D^{(j)}$ is the within-curve coordinate kernel for curve $j$. This means
$\text{Cov}\left(f_{d}^{(j)}(s_{i}^{(j)}),f_{d'}^{(j)}(s_{p}^{(j)})\right)=k^{(j)}(s_{i}^{(j)},s_{p}^{(j)})k_{D}^{(j)}(d,d')$.
If both $k^{(j)}$ and $k_D^{(j)}$ are valid kernels (i.e., symmetric and positive semi-definite), then their product is also a valid kernel \citep{williams2006gaussian}. 

In the baseline model of Section \ref{subsec:Setup} (with uncorrelated coordinate functions), \eqref{eq:MOGPmatrix} is assumed to have off-diagonal entries equal to zero, meaning that  $k_D^{(j)}(d,d')=\delta(d,d')$ for Kronecker delta $\delta$. 
However, suitable modifications of the within-curve coordinate kernel $k_D^{(j)}$ can flexibly allow correlation between coordinates. 
We note that a separable kernel assumes that similarity for multiple outputs can be characterized independently across every dimension. In the setting of closed curves, this means that we are implicitly assuming that $k_D^{(j)}$ remains the same across all possible pairs of inputs $s_i^{(j)},s_{p}^{(j)} \in \mathcal{D}$. This assumption nicely decomposes correlation into separate components attributed between inputs and between coordinate functions. Unlike the periodic requirement for within-coordinate dependence, this hierarchy can be extended to higher dimensions. Alternatively, one could construct more complex interactions between kernels $k^{(j)}$ and $k_D^{(j)}$.

Note that the matrix in \eqref{eq:MOGPmatrix} can be written as $k^{(j)}(s_{i}^{(j)},s_{p}^{(j)})\cdot D^{(j)}$ where $D^{(j)}$ is a $2\times2$ symmetric and positive semi-definite matrix such that $D^{(j)}_{d,d'}=k_{D}^{(j)}(d,d')$. The matrix $D^{(j)}$ is often referred to as a \emph{coregionalization matrix}. Estimation of $D^{(j)}$ means estimating its entries subject to the constraint of symmetry and positive semi-definiteness, and quantifies the dependence accounted for between a single curve's coordinate functions. The above construction of product kernel follows from \citet{bonilla2007multi}.
For the remainder of this paper, we focus on the product kernel. %
To ensure positive semidefiniteness, we follow  the Python module \texttt{GPFlow2} \citep{GPflow2017,GPflow2020multioutput} convention by parameterizing the coregionalization matrix as $D^{(j)}=W^{(j)}(W^{(j)})^T + \text{diag}(\boldsymbol{\kappa}^{(j)})$, where $W^{(j)}$ is a $2\times r$ matrix with $\text{rank}(W^{(j)})=r\leq 2$ and $\boldsymbol{\kappa}^{(j)} \in \mathbb{R}^2$. 
In general, increasing the rank yields more flexible matrices $D^{(j)}$.

%
%
%
%
%
%
%
%
%
%
%
%
%
%

%

%
%

%
%

%
%
%
%
%
%
%
%
%
%
%
%
%
%
%
%
%
%
%
%
%
%
%
%
%
%
%
%

%
%
%

\textbf{Multiple-output Model for Multiple Curves:} In order to incorporate between-curve dependence (3), we borrow the coregionalization specification used 
to incorporate within-curve coordinate dependence for a fixed curve to also exploit structural similarities across curves. 
This between-curve dependence is conditional on existing dependence structures between inputs (parameter values) and curve-specific outputs (dimensions) %
 after preprocessing (Supplementary Material \ref{subsec:PreProc}). %

To specify the full Gaussian process model which incorporates this multi-level dependence structure across all $J$ curves, we consider the following joint model specified on stacked vectorizations of each curve's sample points:
\begin{gather}
\text{vec}\left( \mathbf{Y} \right) =
\left(\begin{array}{c}
\text{vec}\left( \mathbf{Y}^{(1)} \right)\\
\vdots\\
\text{vec}\left( \mathbf{Y}^{(J)} \right)
\end{array}\right) =\left(\begin{array}{c}
\mathbf{f}^{(1)}(\bm{s}^{(1)})\\
\vdots\\
\mathbf{f}^{(J)}(\bm{s}^{(J)})
\end{array}\right)+\left(\begin{array}{c}
\bm{\epsilon}^{(1)}\\
\vdots\\
\bm{\epsilon}^{(J)}
\end{array}\right)\in\mathbb{R}^{\sum_{j=1}^J 2n_{j}} \\
\mathbf{f}^{(j)}(\mathbf{s}^{(j)}) \coloneqq \left( f_{1}^{(j)}(s_{1}), f_{2}^{(j)}(s_{1}), \hdots, f_{1}^{(j)}(s_{n_{j}}), f_{2}^{(j)}(s_{n_{j}}) \right)^T \\
\bm{\epsilon}^{(j)} \coloneqq \left( \epsilon_{11}^{(j)}, \epsilon_{12}^{(j)},\hdots,\epsilon_{n_{j}1}^{(j)},\epsilon_{n_{j}2}^{(j)}\right)^T \,.
\label{eq:MOGP_mult_curves_vec}
\end{gather}

%
%
%
%
%
%
%
%
%
%
%
%
%

\begingroup
\renewcommand{\arraystretch}{0.5} %
\begin{figure}[t!]
\centering 
\begin{tabular}{{@{}c@{}|@{}c@{}@{}c@{}@{}c@{}}}
\hline
& Camel 1 & Camel 2 & Camel 3 \\
\hline
\centered{(a)}&\centered{\includegraphics[width=0.28\textwidth]{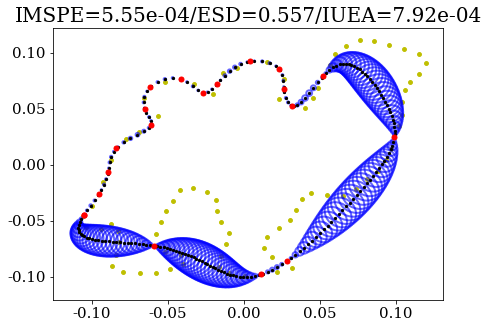}}&\centered{\includegraphics[width=0.28\textwidth]{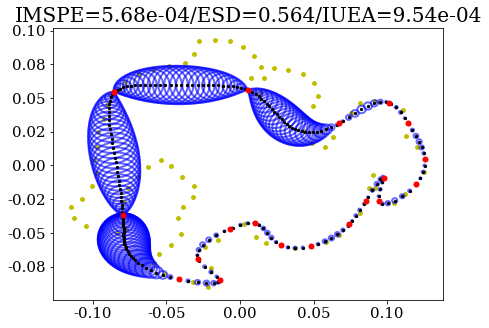}}&\centered{\includegraphics[width=0.28\textwidth]{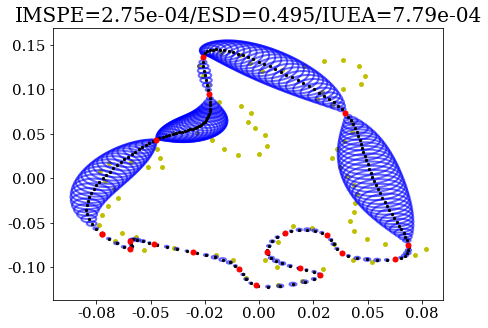}}\\
\hline
\centered{(b)}&\centered{\includegraphics[width=0.28\textwidth]{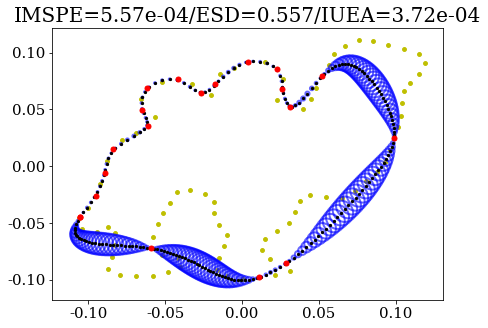}}&\centered{\includegraphics[width=0.28\textwidth]{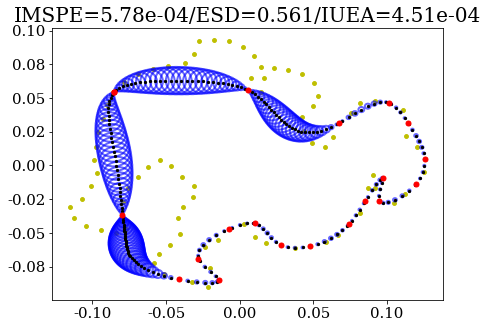}}&\centered{\includegraphics[width=0.28\textwidth]{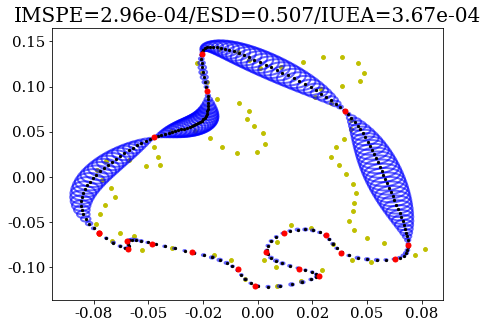}}\\
\hline
\centered{(c)}&\centered{\includegraphics[width=0.28\textwidth]{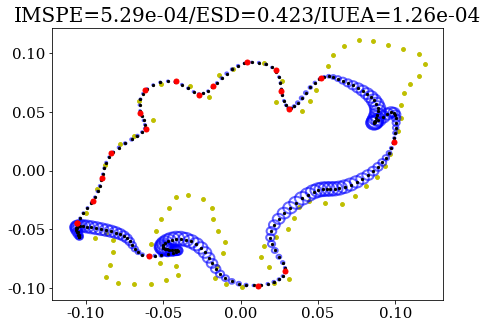}}&\centered{\includegraphics[width=0.28\textwidth]{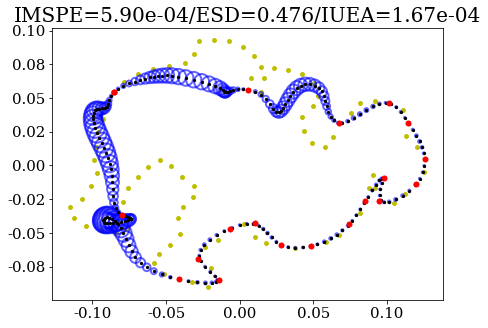}}&\centered{\includegraphics[width=0.28\textwidth]{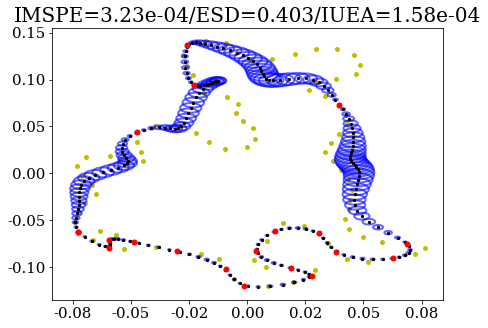}}\\
\hline
\centered{(d)}&\centered{\includegraphics[width=0.28\textwidth]{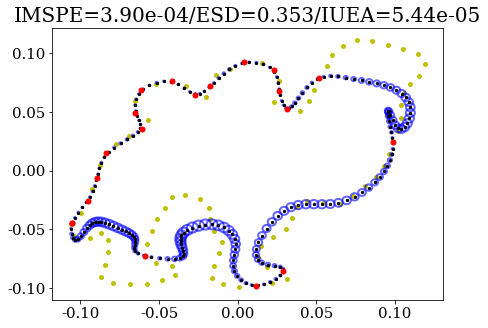}}&\centered{\includegraphics[width=0.28\textwidth]{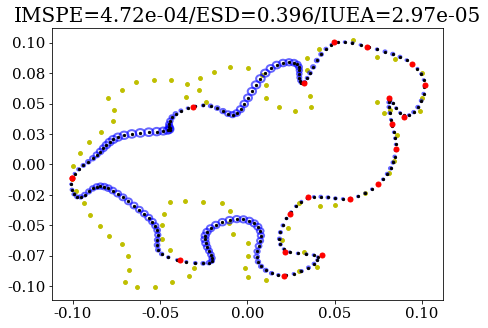}}&\centered{\includegraphics[width=0.28\textwidth]{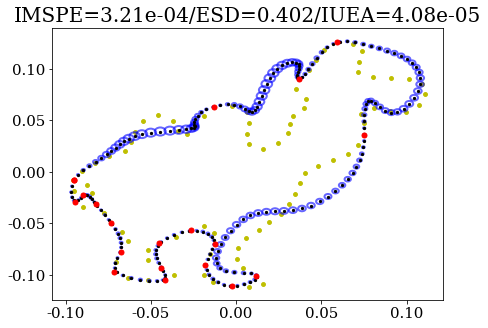}}\\
\hline
\end{tabular}
 \caption{\textbf{Comparisons between baseline model and joint multiple-output GP models for multiple curves.} Three camel curves from MPEG-7, sampled at 20 points. In different rows, we show (a) baseline model fits for each curve, (b) separate multiple-output GP fits for each curve, and a (c) jointly fit multiple-output GP, all without preprocessing. Row (d) shows the fit of model (c), with preprocessing.}%
\label{fig:multishape_comparison_camel} 
\end{figure}
\endgroup

For this joint model across curves, we also choose to specify a separable kernel: for scalar kernels $k$, $k_D$, and $k_C$, let $K=k \otimes k_D \otimes k_C$ have elements
$\text{Cov}\left( f^{(j)}_{d}(s_i^{(j)}),f^{(j')}_{d'}(s_p^{(j')}) \right) = k(s_i^{(j)},s_p^{(j')})k_D(d,d')k_C(j,j')$.
The within-curve coordinate dependence kernel $k_D$ is specified %
as above, with coregionalization matrix $D$ that has entries $D_{d,d'}=k_D(d,d')$, constrained to be symmetric and positive semi-definite. Due to the assumption of separability, note that 
$k_D$ is independent of the curve index $j$, which fixes the within-curve coordinate dependence across all curves. %
To incorporate between-curve dependence, we specify a between-curve scalar kernel $k_C$ such that its evaluation $k_C(j,j')$ measures the similarity between curves $\mathbf{f}^{(j)}$ and $\mathbf{f}^{(j')}$. Similar to $k_D$, a simple choice is to use the intrinsic coregionalization model, defining a $J \times J$-dimensional coregionalization matrix $C$ such that $C_{j,j'}=k_C(j,j')$. Fitting this model amounts to estimating hyperparameters of kernels $k,k_D,k_C$. %
Based on the three-dimensional kernel $K$, the corresponding covariance matrix of dimension $\sum_{j=1}^J 2n_j$  for all observed sample points across all $J$ curves can be computed.%

Figure \ref{fig:multishape_comparison_camel} shows a comparison of the baseline model, multiple-output GP, and joint multiple-output GP fits, where each of the three camels is cluster sampled in a different region: the first around the humps, the second around the camel's front, and the third around its rear legs and tail. Curves are re-scaled to unit length and zero-centered; the bottom row also rotates the last two camels to match the first camel prior to model fitting, whereas the other panels do not have preprocessing (Supplementary Material \ref{subsec:PreProc}). 

Note the reduction in predictive uncertainty obtained in fitting multiple-output models compared to the baseline model. Additionally,  %
we obtain improved mean curve prediction under joint modeling of the three curves, as each curve has borrowed detailed information from other curves in different regions of the camel. The rotation preprocessing actually impacts fits quite differently: %
the first and second camels have more distinct rear legs, and apparent self-intersections in the predictive mean for the second camel disappear when properly rotationally aligned.
The aligned multiple-output GP recovers the original boundaries of all three camels much better that the other models, both visually and quantitatively. 

\section{Applications}
\label{sec:Future}

\subsection{Robust Representative Shape Summaries}
\label{subsec:CurveRec}

Consider a scenario where 
observed points are sparse samples of the outlines of three leaves from the Flavia dataset \citep{wu2007leaf}, and the goal is to compute a representative shape summary.
A standard approach in elastic shape analysis is to compute the Karcher mean, found by minimizing a sum of squared distances in the elastic shape space. Distance calculations involve shape registration, which can perform unfavorably in sparse sampling regimes and in the presence of noise (Supplementary Material  \ref{subsec:ElasticShapeReg}). Even if leaves are arbitrarily resampled to 101 points, Karcher mean estimates can be quite sensitive to sparse samples, clustered sampling schemes, and noise,
as illustrated by Figure \ref{fig:curve_rep}.%

\begingroup
\renewcommand{\arraystretch}{0.5} %
\begin{figure}[t]
\centering \begin{adjustbox}{center}
\begin{tabular}{c|c|c|c|c}
\hline
 &(a) No noise, $10$ & (b) No noise, $20$ & (c) No noise, $50$ & (d) Noise, $50$\\
\hline
&\includegraphics[width=0.2\textwidth]{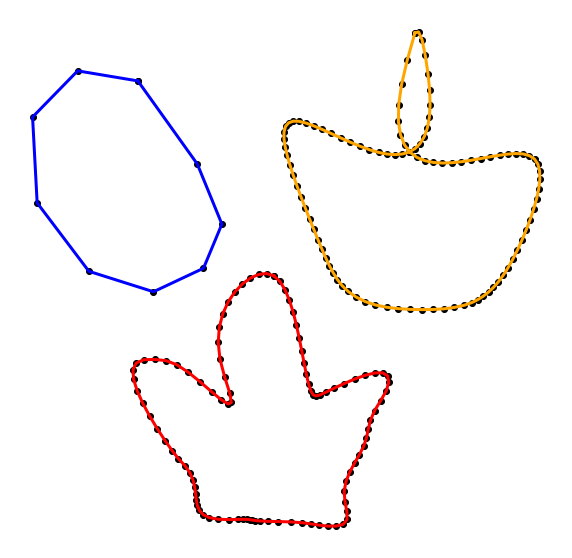}&\includegraphics[width=0.2\textwidth]{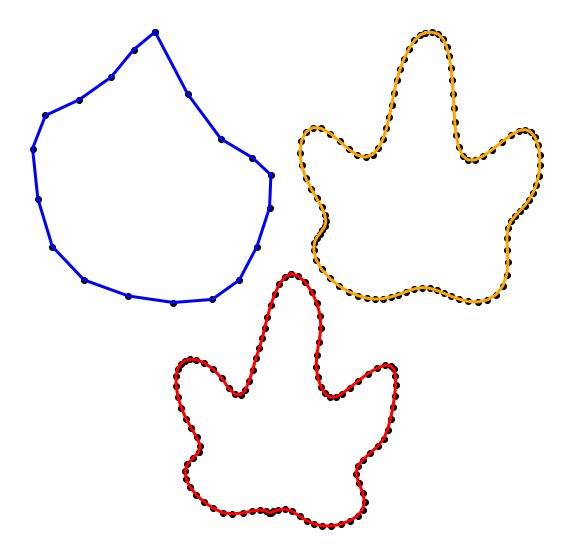}&\includegraphics[width=0.2\textwidth]{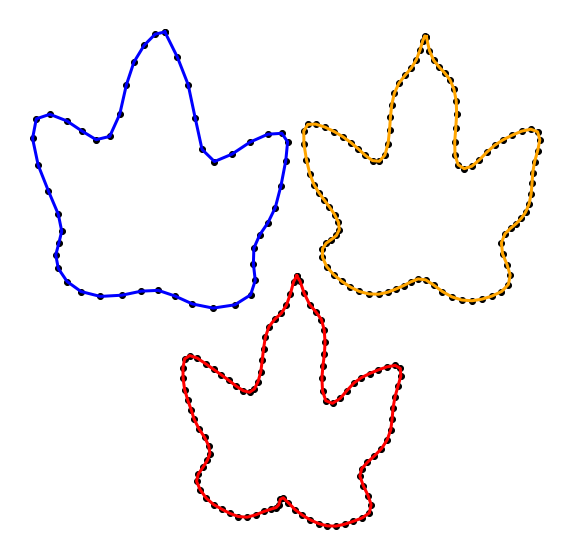}&\includegraphics[width=0.2\textwidth]{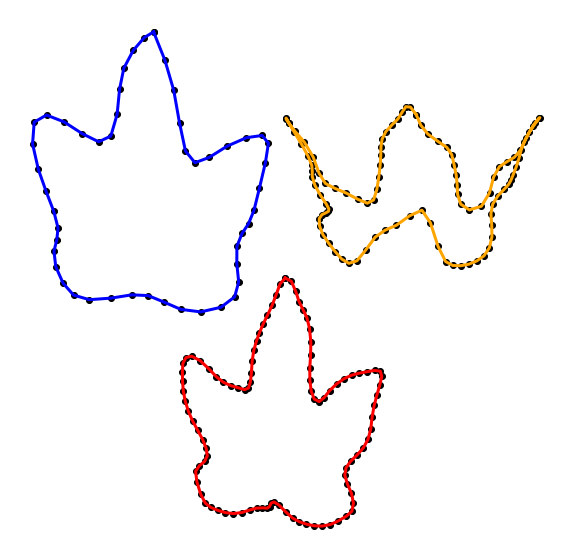}\\
\hline
\rowcolor{blue!30}
(I) & $3.38 \times 10^{-2}$ & $2.39 \times 10^{-2}$ & $1.41 \times 10^{-2}$ & $1.43 \times 10^{-2}$ \\
\hline
\rowcolor{orange!30}
(II) & $1.90 \times 10^{-2}$ & $1.35 \times 10^{-2}$ & $9.32 \times 10^{-3}$ & $2.62 \times 10^{-2}$ \\
\hline
\rowcolor{red!30}
(III)  & $4.86 \times 10^{-3}$ & $1.30 \times 10^{-2}$ & $2.03 \times 10^{-3}$ & $2.37 \times 10^{-3}$ \\
\hline
\end{tabular}
\end{adjustbox}
 \caption{\textbf{Comparison of GW distances to ``ground-truth'' mean shapes from GP modeling and Karcher mean computation.} 
Panels (a)-(d) show mean summary comparisons for 3 Flavia leaf curves, observed noise-free at 10, 20, and 50 sample points (a-c) and with Gaussian noise (standard deviation $0.007$) at 50 sample points (d). Each panel shows the Karcher mean computed from observed sample points (blue, (I)) and from interpolative re-sampling to 101 points (orange, (II)), along with pointwise predictive mean from a multiple-output GP fit (red, (III)). The GW distance between corresponding colored curve and the Karcher mean computed using 101 sample points. %
}
\label{fig:curve_rep} 
\end{figure}
\endgroup

An alternative is to fit a multiple-output GP to the collection of curves, and then take pointwise averages of the predictive means across curves. This ensures that the representative curve is also closed, and modeling between-curve dependence learns the relationships across curves at varying parameter values. Figure \ref{fig:curve_rep} shows that this pointwise predictive mean characterizes the representative shape of the three Flavia leaves %
more consistently under varied sampling regimes. Increasing the number of observed sample points (panels (b) and (c)) can mitigate some of these issues accompanying the raw and re-sampled Karcher means, where pointwise predictive means look more similar to the Karcher mean estimated using the full 101 points; however, the presence of noise can exacerbate issues, as shown in panel (d), while the pointwise GP predictive mean is robust to noise. 
The bottom of the figure shows 
GW distances %
between each representative shape summary and the Karcher mean computed using all 101 points. Note that the pointwise GP predictive mean outperforms other alternatives here.

\subsection{Landmark Detection}
\label{subsec:lmk}
Another application within shape analysis that crucially relies on valid uncertainty quantification is %
landmark detection. %
Theorem \ref{thm:consistency thm} guarantees model consistency when curves are densely sampled. However, dense samples are not always practical for modeling or computing purposes. %
Thus, detecting a small set of observed sample points (referred to as landmarks) to adequately %
compress the original curve in a principled way is important.  
As outlined below, this can be done either (i) simultaneously (jointly estimate a landmark set, e.g., \cite{strait2019lmk}), or (ii) sequentially (conditionally estimate the next landmark given a current landmark set, e.g., \cite{gao2019gaussian}). Our proposed GP model can use uncertainty estimates to handle either task.
To simultaneously estimate $p$ landmarks separately on a collection of multiple curves, %
we propose the following simple procedure: (1) repeatedly generate $p$ random parameter values, denoted $\bm{s}$, common to all curves; (2) fit the multiple-output GP model to the Euclidean coordinates corresponding to $\bm{s}$; (3) compare model fits to the original, densely sampled points using performance metrics (Supplementary Material \ref{sec:Metrics}). For instance, we may want the landmark set $\bm{s}$ which minimizes IMSPE, averaged over curves.
Similar to \cite{strait2019lmk}, if an estimate $\hat{p}$ of the number of landmarks is desired, the chosen performance metric can be evaluated over a set of pre-specified values of $p$. Then, $\hat{p}$ is selected either qualitatively, as the smallest value where the minimum IMSPE begins to stabilize, or quantitatively using, e.g., the gap statistic \citep{tibshirani2001gap}.

Figure \ref{fig:sim_lmk_leaf} demonstrates this simultaneous landmarking procedure on three Flavia leaves, assumed to be densely sampled. For $p \in \{3,\ldots,20\}$ landmarks, 1000 sets of landmarks $\bm{s}$ are randomly sampled without replacement from the densely-sampled curves, and the joint multiple-output GP model is fit. Each fit is evaluated based on average IMSPE and IUEA across the three curves; the random landmark configuration which minimizes these quantities for each $p$ is shown in the top left panel. IMSPE stabilizes around $\hat{p}=10$ landmarks, whereas the IUEA more steadily decreases. The other panels show the joint fit for the optimally chosen $\hat{p}=10$ landmarks under the IMSPE criteria; note that the fitted curves here provide a fairly adequate approximation of the original curves, supporting our preference for observed sample points to be equally spaced along the curve domain.

\begin{figure}[t]
\centering 
\begin{tabular}{cc}
\hline
\includegraphics[width=0.35\textwidth]{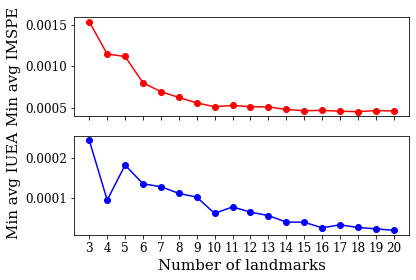}&\includegraphics[width=0.35\textwidth]{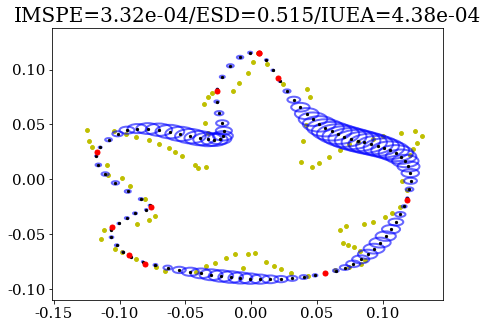}\\
\includegraphics[width=0.35\textwidth]{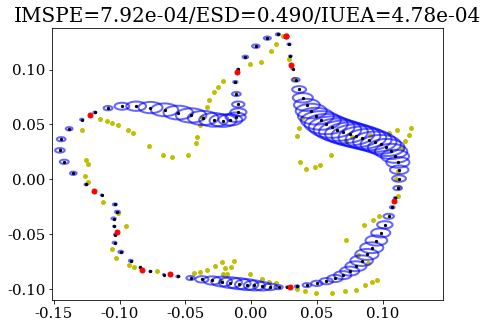}&\includegraphics[width=0.35\textwidth]{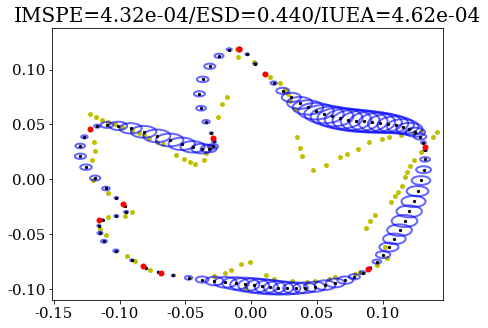}\\
\hline
\end{tabular}
\caption{\textbf{Simultaneous landmarking for multiple curves.} Three leaves from the same class of the Flavia dataset (top right and bottom panels), modeled using jointly fit multiple-output GP based on $\hat{p}=10$ landmarks. The top left panel shows the minimum average IMSPE (red) and IUEA (blue) as a function of the locations of $p$ landmarks.}
\label{fig:sim_lmk_leaf} 
\end{figure}

\begingroup
\renewcommand{\arraystretch}{0.5} %
\begin{figure}[t!]
\centering 
\begin{tabular}{ccc}
\hline
Single-output GP & Separate multiple-output GP & Joint multiple-output GP\\
\hline
\includegraphics[width=0.296\textwidth]{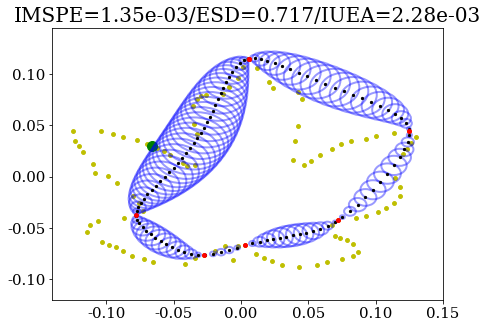}&\includegraphics[width=0.3\textwidth]{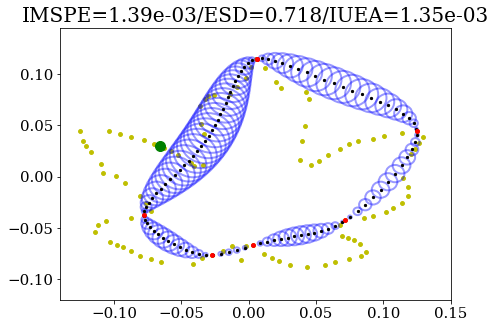}&\includegraphics[width=0.3\textwidth]{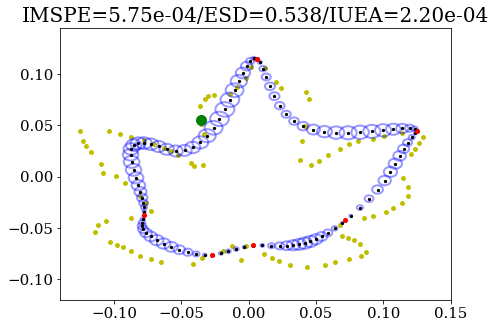}\\
\includegraphics[width=0.296\textwidth]{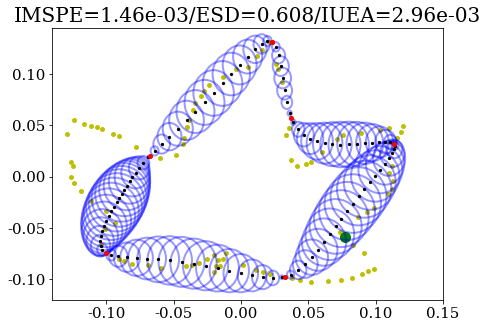}&\includegraphics[width=0.3\textwidth]{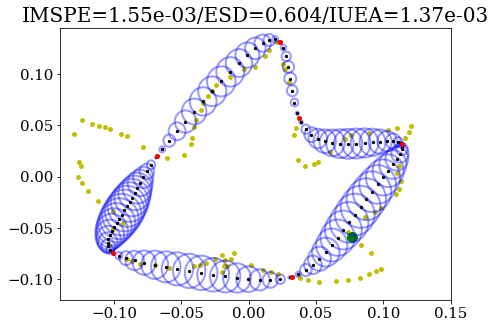}&\includegraphics[width=0.3\textwidth]{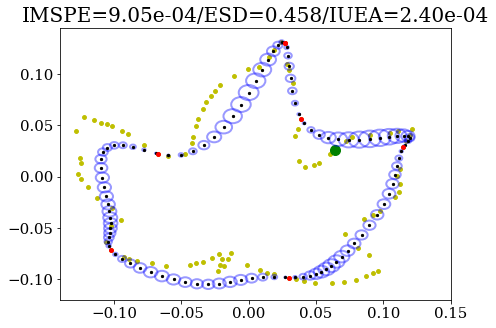}\\
\includegraphics[width=0.296\textwidth]{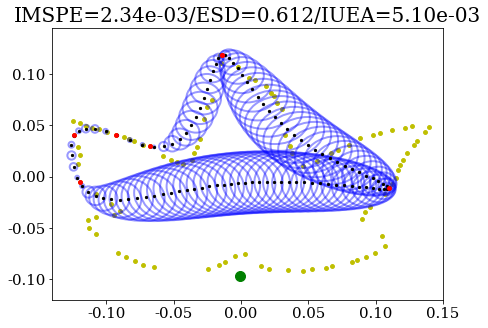}&\includegraphics[width=0.3\textwidth]{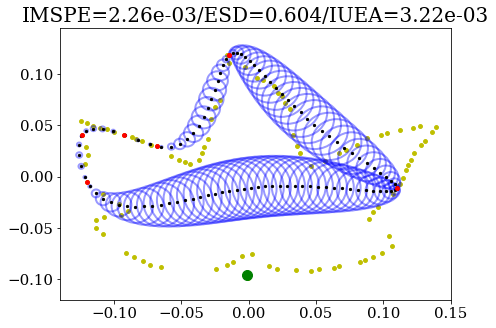}&\includegraphics[width=0.3\textwidth]{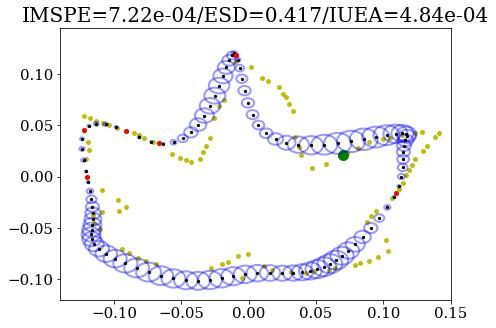}\\
\hline
\end{tabular}
\caption{\textbf{Iterative landmarking comparison 
between various GP models for multiple curves.} Three leaves from the same class of the Flavia dataset, modeled using single-output GPs (left column), separate multiple-output GPs for each curve (middle) and a jointly fit multiple-output GP (right). Each leaf is observed at 7 sample points, randomly selected from the true curve. The next landmark, as suggested by Equation \eqref{eq:lmk_opt} with $\lambda=0.5$, is illustrated in green on the true curve.  
}
\label{fig:lmk_leaf_comp} 
\end{figure}
\endgroup

We also outline a sequential procedure for landmarking, inspired by \cite{gao2019gaussian}, as follows: (1) fit a GP model to the curve(s); (2) obtain the estimated %
prediction variance matrix for both coordinates of a single curve, $\tilde{\Sigma}_p(s^*)$, with each coordinate's predictive variances $\tilde{\sigma}_{p1}(s^*), \tilde{\sigma}_{p2}(s^*)$ on the diagonal; (3) set the next landmark $s_{p+1}$ as:
\begin{equation}
s_{p+1} = \underset{s^* \in \mathcal{D}}{\text{argmax}} \ \left(\lambda\tilde{\sigma}_{p1}(s^*) + (1-\lambda)\tilde{\sigma}_{p2}(s^*)\right) \,,
\label{eq:lmk_opt}
\end{equation}
where $\lambda \in [0,1]$ controls the weight  %
of the first coordinate's predictive variance.%

Figure \ref{fig:lmk_leaf_comp} compares the sequential landmarking procedure under different GP models for selecting the next landmark on three Flavia leaves, each sampled at 7 random points along the true curve. 
GPs are fit under the single-output, separate multiple-output, and joint multiple-output kernel formulations. Once fit, the next landmark is identified as the point on the true curve which solves \eqref{eq:lmk_opt}. 
In the first two columns, the landmarks are in different locations across curves as a result of between-curve independence. %
However, the joint multiple-output model (in the last column) %
communicates information across the three leaves through the coregionalization kernel, by exploiting regions in which each individual leaf has more information in fitting; as a result, curve fits more closely resemble each individual leaf.
Eventually, landmarks will target those individual curve regions which are less sampled, as total uncertainties are driven down in other regions. %

Note that both sequential and simultaneous landmark detection appear to prefer landmark configurations that resemble a uniform sampling. In a sense, the uniform sampling is not only the default choice in a dense sampling to ensure both model consistency (Theorem \ref{thm:consistency thm}) and arc-length parameterization consistency (Theorem \ref{thm:con_xy_arc}), but is also what GP-based simultaneous and sequential landmarking will tend towards.

\subsection{Tooth Reconstruction}
\label{subsec:tooth}

The Tufts Dental Database\footnote{\url{http://tdd.ece.tufts.edu/}} \citep{tuftsteeth} contains 1000 panoramic dental X-ray images, along with additional expert annotations of abnormal teeth and other miscellaneous information. The top panel of Figure \ref{fig:teeth} shows X-ray images for three patients, selected because they have tooth numbers 31 and 32, outlined in the bottom left of each image. Use of a boundary estimation procedure to extract the outlines of these teeth, e.g., active contours \citep{Snake}, does not satisfactorily yield separate contours for each teeth due to the nature of these binary images \citep{luo2019combining}. In particular, there is a significant gap between these two teeth in patient 1, whereas these teeth appear ``joined together'' in patients 2 and 3. We want to estimate outlines of the individual teeth,
which can be viewed through the task of \emph{curve reconstruction}: use model prediction to complete curves in a region for which data is unavailable. 

\begingroup
\renewcommand{\arraystretch}{0.5} %
\begin{figure}[t!]
\centering \begin{adjustbox}{center}
\begin{tabular}{ccc}
\hline
Patient 1 & Patient 2 & Patient 3\\
\hline
\includegraphics[width=0.25\textwidth]{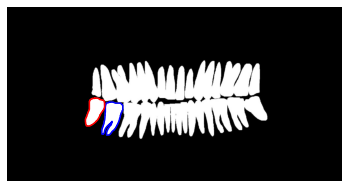}&\includegraphics[width=0.25\textwidth]{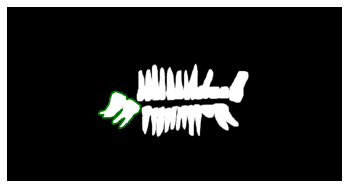}&\includegraphics[width=0.25\textwidth]{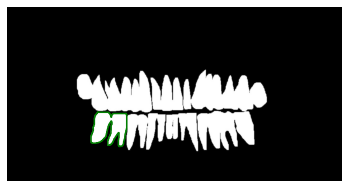}\\
\includegraphics[width=0.25\textwidth]{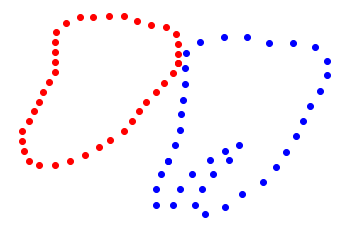}&\includegraphics[width=0.25\textwidth]{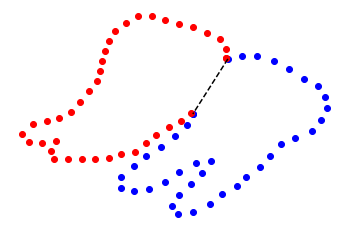}&\includegraphics[width=0.25\textwidth]{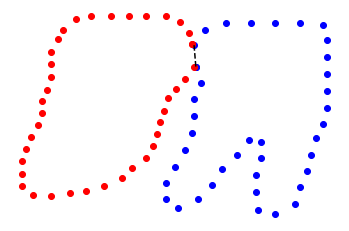}\\
\hline
\includegraphics[width=0.25\textwidth]{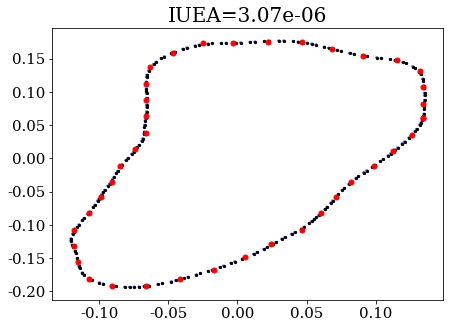}&\includegraphics[width=0.25\textwidth]{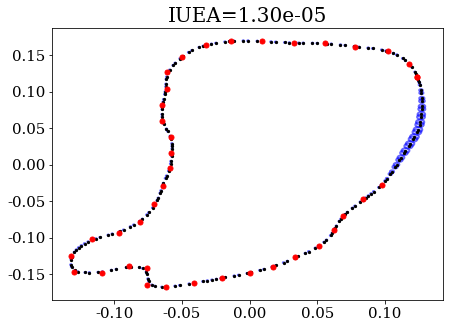}&\includegraphics[width=0.255\textwidth]{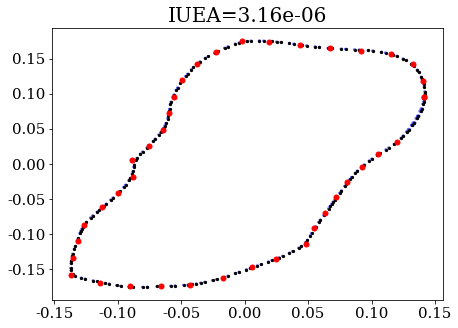}\\
\includegraphics[width=0.25\textwidth]{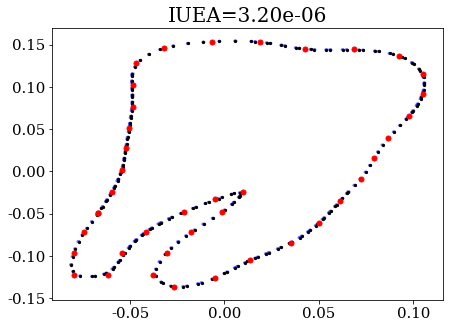}&\includegraphics[width=0.25\textwidth]{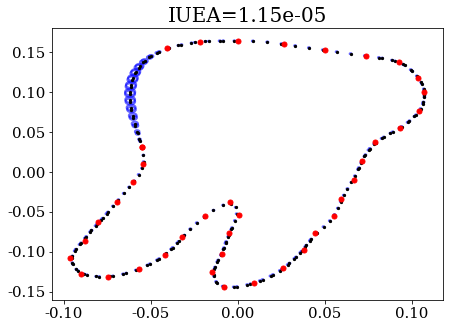}&\includegraphics[width=0.26\textwidth]{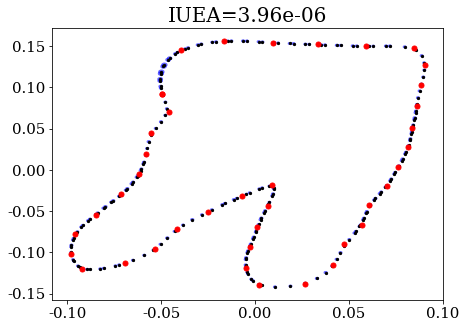}\\
\hline
\end{tabular}
\end{adjustbox}
\caption{\textbf{Joint multiple-output GP model for closed teeth outline prediction.} Reconstruction of separate tooth boundaries for three patients from the teeth dataset. Top row shows initial patient x-ray images with the relevant teeth outlined; second row shows the data (40 observed sample points each) 
}
\label{fig:teeth} 
\end{figure}
\endgroup

We can use the proposed joint multiple-output GP model to achieve this goal of separate tooth reconstruction. In particular, we identify the two teeth (31 and 32) 
separately in patient 1 (colored differently in the second row of Figure \ref{fig:teeth}). For patients 2 and 3, we manually split points on the green outlines (a single closed curve for each patient) into two separate sets at points of high curvature.
These separate curves are treated as closed by connecting the split points to each other, as shown by the dashed lines in the figure, to follow the practice in \citet{matthews21teeth}. Since there is no observed data in the region where we expect the boundary between teeth, we expect increased uncertainty estimates. 

The collection of red curves (the left-most teeth for each patient) is input to fit one multiple-output GP, and similarly the collection of blue curves (the tooth directly to the right) is the input for a second multiple-output GP, both after preprocessing. Predictive mean and uncertainty ellipsoids are shown in the third and fourth rows of Figure \ref{fig:teeth}, respectively. 
Note that the predictive mean is nonlinear in the missing boundary between teeth for patients 2 and 3, and the uncertainty is properly characterized due to the borrowing of structural information about the tooth in that region across patients. 

\subsection{Modeling Sub-populations}
\label{subsec:ClusteringMethods}

\begingroup
\renewcommand{\arraystretch}{0.5} %
\begin{figure}[t!]
\centering 
\begin{tabular}{c|c|c}
\hline
& (a) & (b)\\
\hline
\includegraphics[width=0.3\textwidth]{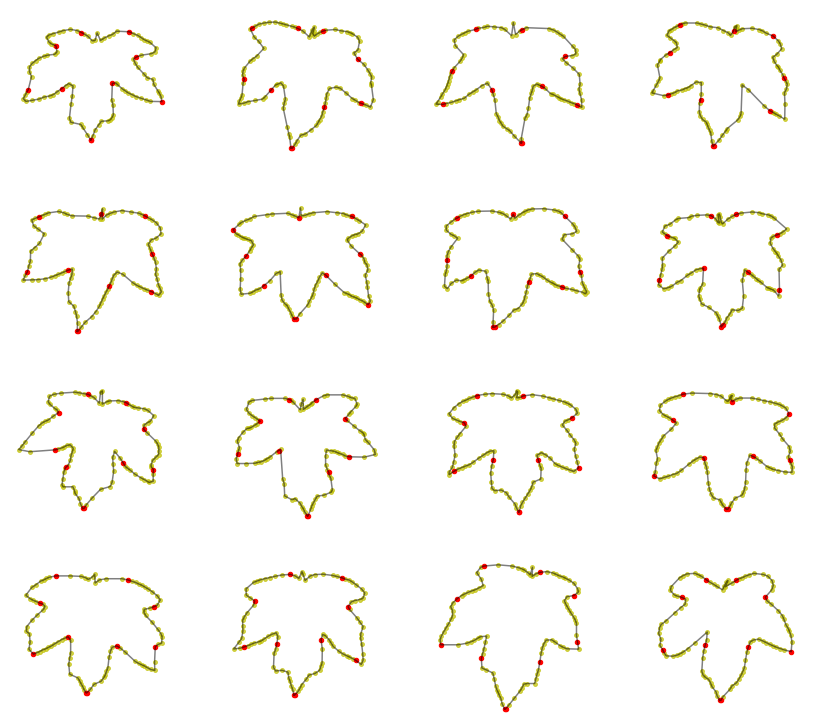}&\includegraphics[width=0.3\textwidth]{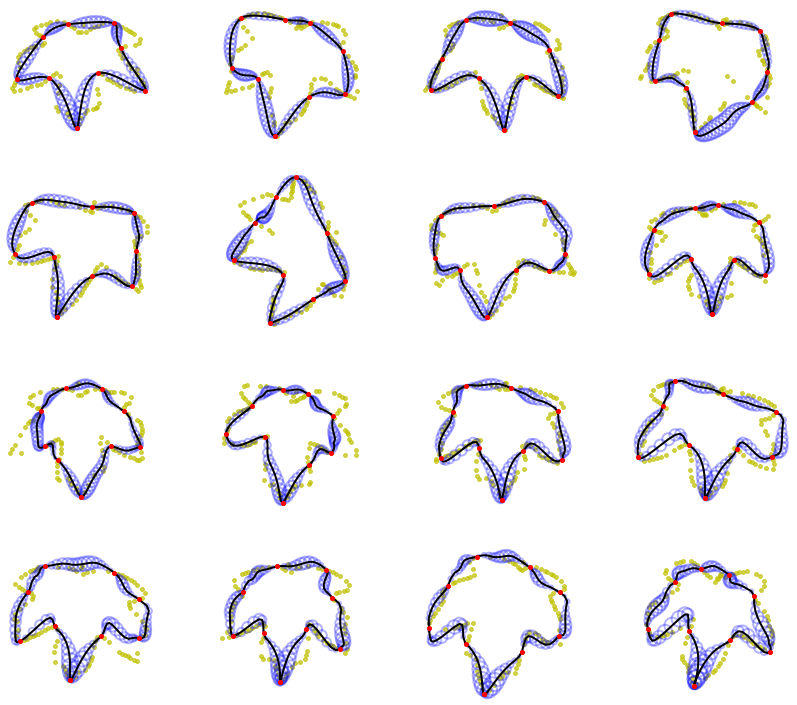}&\includegraphics[width=0.3\textwidth]{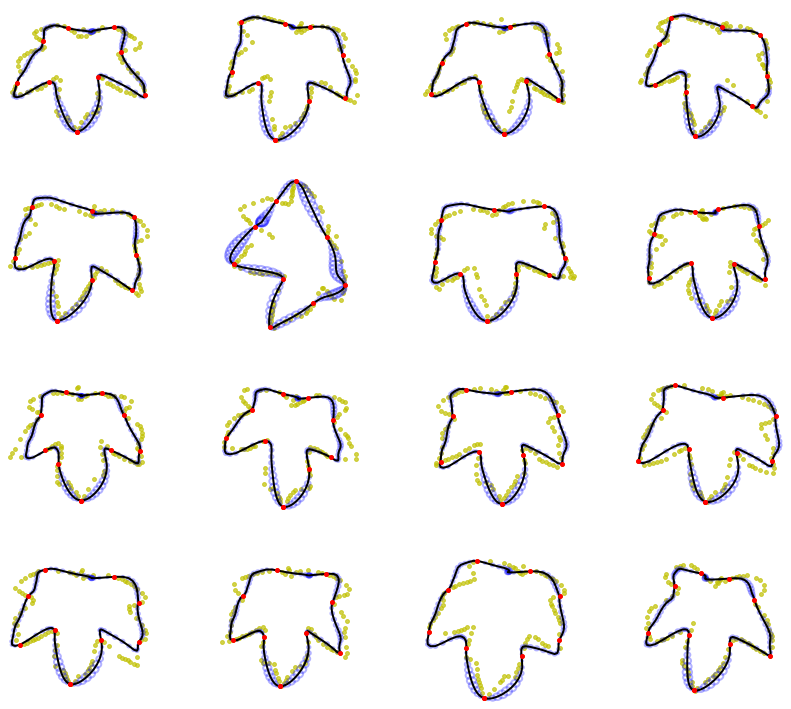}\\
\hline
Class 1 mean IMSPE & $6.62\times 10^{-4}$ & $5.41\times 10^{-4}$\\
Class 1 mean ESD & 0.474 & 0.423 \\
Class 1 mean IUEA & $2.15 \times 10^{-4}$ & $5.33 \times 10^{-5}$\\
\hline
\hline
\includegraphics[width=0.3\textwidth]{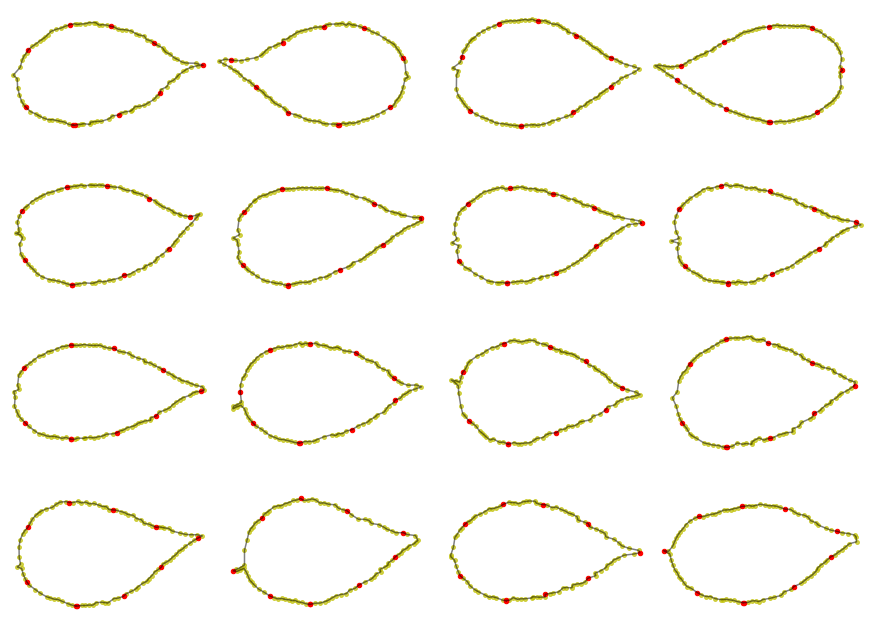}&\includegraphics[width=0.3\textwidth]{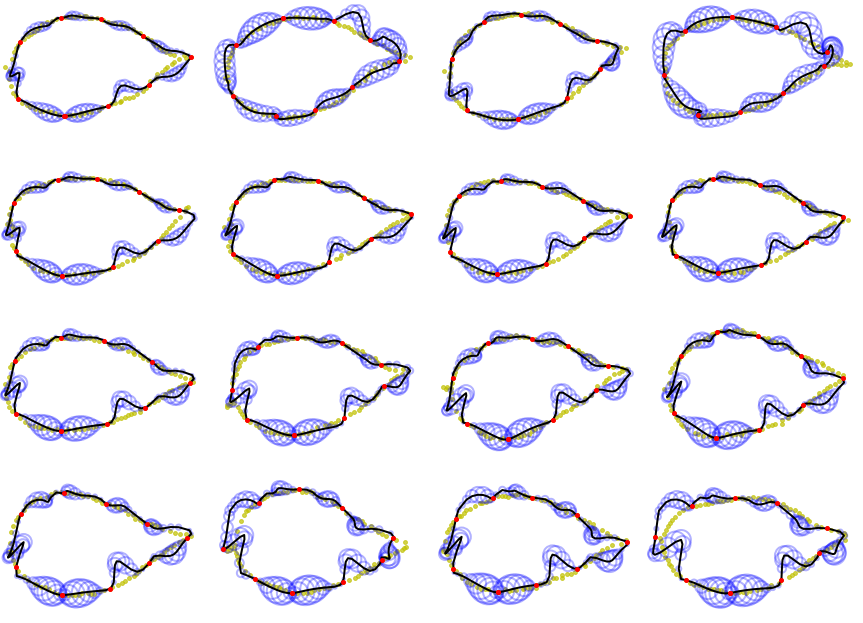}&\includegraphics[width=0.3\textwidth]{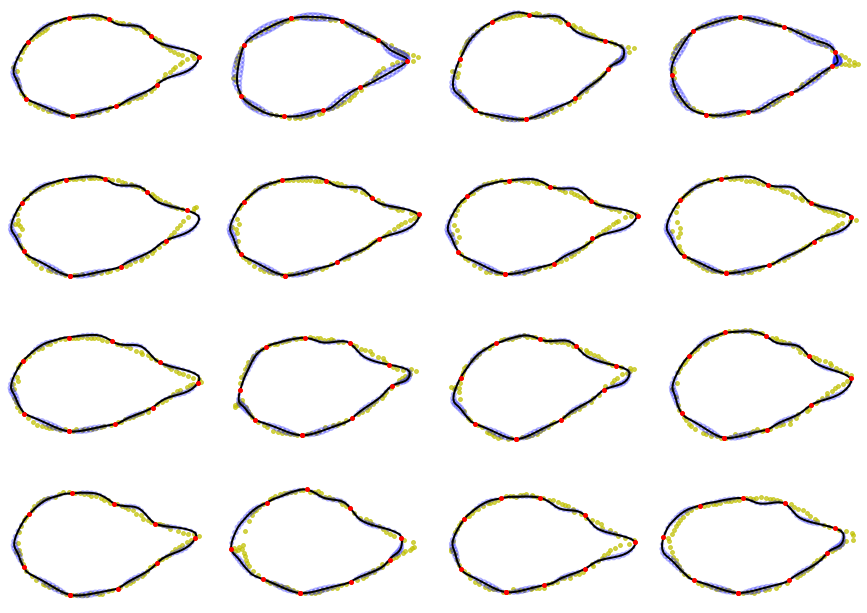}\\
\hline
Class 2 mean IMSPE & $6.01 \times 10^{-4}$ & $3.41 \times 10^{-4}$\\
Class 2 mean ESD & 0.407 & 0.249 \\
Class 2 mean IUEA & $5.32 \times 10^{-4}$ & $5.11 \times 10^{-5}$\\
\hline
\end{tabular}
\caption{\textbf{Comparing multiple-output GP fits for clustered curves.} Sixteen leaves each from two sub-populations (corresponding to the two rows) of the Flavia dataset, modeled using (a) multiple-output GP with no class label kernel, 
and (b) multiple-output GP with coregionalization class label kernel. Each leaf is observed at 10 sample points, selected from the true curve as shown in the top panel. Averaged IMSPE, ESD, and IUEA (Supplementary Material \ref{sec:Metrics}) across curves within each class is reported.
}\label{fig:flavia_clust} 
\end{figure}
\endgroup

Generalization of classical methods to data from different classes is challenging and usually requires specific consideration of sub-population dependence \citep{stocker2021functional,greven2017general}. 
Previous models for functional and curve data, like \citet{gelfand2005bayesian}, propose Gaussian mixtures to account for this dependence.
We can extend our specified multiple-output kernel for GP fitting to encourage similarity of within-class curve fits while allowing substantial shape differences across classes. %

In this setting, observed sample points from underlying curve $\mathbf{f}^{(j)}$ are also supplemeneted with class label $L^{(j)} \in \{ 1,\ldots,r \}$, i.e., curves are drawn from $r$ classes with $r\leq J$, to indicate sub-population membership.
A simple way to specify a kernel which accounts for class labels is by assuming the full kernel has a separable structure, i.e., $K=k \otimes k_D \otimes k_C \otimes k_G$, with elements
$\text{Cov}\left( f^{(j)}_{d}(s_i^{(j)}),f^{(j')}_{d'}(s_p^{(j')}) \right) = k(s_i^{(j)},s_p^{(j')})k_D(d,d')k_C(j,j')k_G(g,g')$,
where $k$, $k_D$, and $k_C$ are specified as in Section \ref{subsec:SingleCurveMOGP}, and $k_G$ is the group-level kernel that compares the class labels $L^{(j)}=g, \ L^{(j')}=g'$ of curves $j, j'$. As with previous dependence sources, a simple choice is to use the intrinsic coregionalization model, defining a $r \times r$-dimensional coregionalization matrix $G$ such that $G_{g,g'}=k_G(g,g')$. %

Figure \ref{fig:flavia_clust} shows two classes of leaf curves from the Flavia dataset, with each class containing 16 leaves, each observed at 10 equally spaced points. %
Column (a) shows model fits when all 32 curves are pooled together, i.e., ignoring class labels. In this case, the simpler-shaped leaves of Class 2 are estimated to have more complex shape, as the joint model borrows information across both classes. Since Class 1 has leaves with more curvature, some of these features are transferred to the fitting of leaves in Class 2, which is also reflected in its class mean IMSPE. However, fitting a MOGP model using the above kernel specification with coregionalization class label kernel %
yields predictive mean curves which more closely resemble corresponding true curves and preserve sub-population structure. %

\section{Conclusion}
Motivated by the presence of noise and sparse data in statistical shape analysis tasks, we formulate a nonparametric multiple-output GP model for fitting closed curves, using a multi-level kernel to incorporate various sources of dependence. The GP model can generatively sample and re-sample closed curves, allowing certain sampling assumptions (e.g., the presence of noiseless dense samples) to be relaxed for applications of curve-based shape analysis methods such as registration, %
curve reconstruction, simultaneous/sequential landmark detection, and sub-population modeling. Theoretical and practical difficulties in GP fitting of closed curves are discussed, motivating the use and constrained estimation of periodic stationary covariance kernels on the input space, combined with coregionalization kernels to account for between-coordinate dependence of a curve as well as between-curve dependence. %
This allows model fitting to exploit the structural similarity of curves in applications. %
We note that our multi-shape GP model extends naturally to closed curves embedded in higher dimensional spaces (rather than $\mathbb{R}^2$), although non-trivial generalizations are needed for modeling closed manifolds with dimension greater than 1.

For future work, we can consider a wider variety of coregionalization kernel construction other than product kernels (e.g., additive, non-stationary and convolution); specifically, we can consider local kernels of specific forms. 
In addition, it would be interesting to explore%
how to incorporate both topological \citep{nigmetov2022topological,luo_generalized_2020,luo2019combining} and geometric information  \citep{solomon2021geometry,chowdhury2021generalized} in real-world applications. Another natural extension is a fully Bayesian version of this GP model, 
which can incorporate curvature and/or shape-informed priors. %

\spacingset{1}
\textbf{Acknowledgments:} HL was supported by the Director, Office of Science, of the U.S. Department of Energy under Contract No. DE-AC02-05CH11231. JS was supported by the Laboratory Directed Research and Development program of Los Alamos National Laboratory under project number 20200065DR. To the best of our knowledge, we claim no conflict of interest with this
manuscript. The code to produce all figures and
experiments will be made available on the authors' website.

\newpage
\renewcommand\thesection{\Alph{section}}
\renewcommand\thesubsection{\thesection.\arabic{subsection}}
\setcounter{section}{0}

\begin{center}
{\Large\textbf{SUPPLEMENTARY MATERIAL}}
\end{center}

\spacingset{1.9}
\section{\label{sec:ill_param}Shape and Parameterizations}

Mathematically, there is a distinction between closed curves and their shapes. Closed curves are 
functions $\mathbf{f}:\mathcal{S}^1 \rightarrow \mathbb{R}^2$.
Shape can be defined in numerous ways, depending on how curves are represented. For this paper, we follow the definition ascribed by elastic shape analysis literature \citep{srivESA}, where shape is a mathematical representation of a curve which is invariant to translation, scale, rotation, and curve parameterization. This is an extension of landmark shape spaces \citep{kendall_shape,dryden2016statistical}, which do not consider parameterization, as shapes are represented by finite point sets. More formally, the shape of curve $\mathbf{f}$ is an equivalence class, containing all curves which only differ by one of the above shape-preserving transformations. The elastic shape space $\mathcal{S}$ is the set of all such equivalence classes. The left panel of Figure \ref{fig:sameshape} shows an example of four closed curves (representing deer from the MPEG-7 dataset) which belong to the same equivalence class (i.e., have identical shape) in $\mathcal{S}$.

Curve parameterizations are important for defining shape, as they drive the matching of features across curves. The middle panel
show two different deer curves parameterized by arc-length. 
Note that the legs are not correctly matched to each other, as these features occur at different arc-length parameter values along the two curves.
The right panel shows correspondences between the same two deer when curve parameterizations are induced by the elastic metric. Note that this appropriately matches the aforementioned legs, and in general results in a more apt comparison between the two deer.

\begin{figure}[h]
\centering \begin{tabular}{c|cc}
\hline
\includegraphics[width=0.3\textwidth]{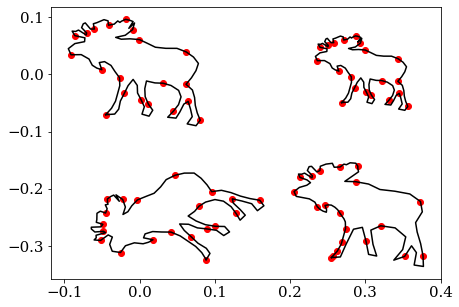} & \includegraphics[width=0.3\textwidth]{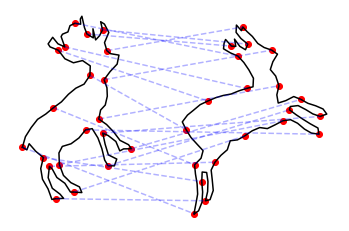} & \includegraphics[width=0.3\textwidth]{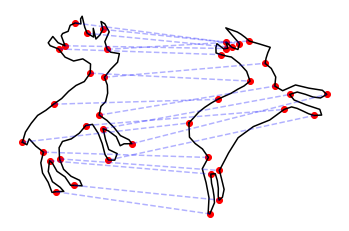}\\
\hline
\end{tabular}\caption{\textbf{Shapes and comparison of parameterizations.} (Left) Four deer curves (in black) from MPEG-7 with equivalent shape, as they only differ by a translation, scaling, rotation, and/or re-parameterization. (Middle / right) Two deer curves, comparing correspondence induced by arc-length parameterizations (middle) to elastic parameterizations (right). Red points are in correspondence across curves (sampled with respect to their parameterizations), with dashed blue lines showing the points in correspondence. }
\label{fig:sameshape}
\end{figure}

\section{Arc-length Parameterization Algorithms}
\label{sec:arc-length Parameterization Algorithms}
In the setting where closed curves are densely sampled, the arc-length parameter is approximated well by equally spaced values on the unit interval assigned to each observed sample point; however, in general, the approximation error of arc-length parameters cannot be ignored. In this section, we present two algorithms to map between observed sample points on closed curves and their corresponding arc-length parameter values. We also provide theoretical error bounds for these algorithms.

Note that these algorithms compute quantities with respect to an underlying curve. For simulations, this can be the (known) true curve, but in practice, this is assumed to be with respect to the piecewise-linear curve constructed from connecting observed sample points. This simple choice is appropriate, as one goal of the model is to estimate the underlying curve, and the ordering of observed sample points reflects the curve's sampling scheme.

\subsection{Enclosing Curves and Parameterization Algorithms}
In Figure \ref{fig:algorithm_xy_arc}, we illustrate Algorithm \ref{alg:xy_to_arc_param}, which converts
the blue point's $x$- and $y$-coordinates (which is not necessarily on the black curve)  into its corresponding arc-length parameter
value represented by yellow shaded
segment. It is numerically computed by identifying a densely ``over-sampled"
black point on the approximating piecewise-linear curve. In this procedure, there are two approximations: the first is in joining observed points by line segments for curve approximation (Step 5 in Algorithm \ref{alg:xy_to_arc_param}), i.e., yellow highlighted and black segments in Figure \ref{fig:algorithm_xy_arc}.
This is intrinsic to the curve.
The second approximation comes from computing ``point-to-segment" distance by dense sampling on the line segment between two consecutive points (Steps 7 to 9 in Algorithm \ref{alg:xy_to_arc_param}), i.e., $N_b$ black dots on the segments between points 3 and 4 in Figure \ref{fig:algorithm_xy_arc}. This depends on the extrinsic metric defined in $\mathbb{R}^2$. 
Algorithm \ref{alg:arc_to_xy_param}
converts the green point with arc-length parameter value represented
by green shaded segment into $x$- and $y$-coordinates (always on
the black curve) with reference to the approximating piecewise-linear curve. In this procedure, only the intrinsic curve approximation is used, and not the extrinsic metric in $\mathbb{R}^2$.

\begin{figure}[t]
\centering 
\includegraphics[width=0.35\textwidth]{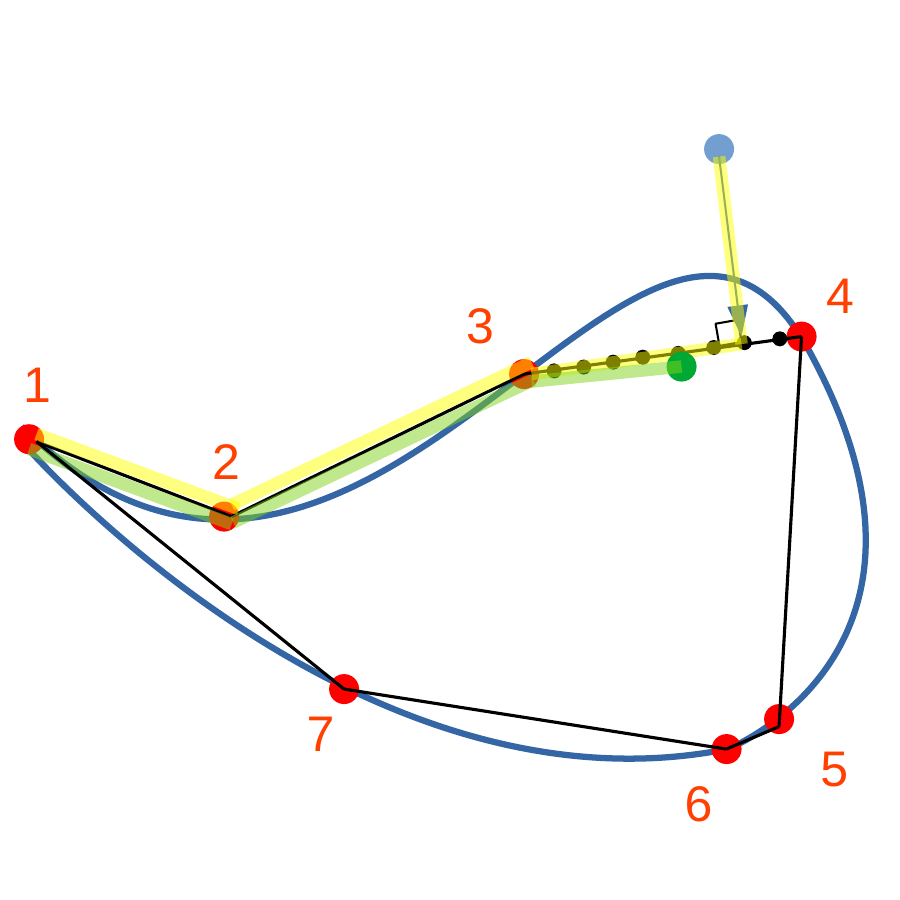}
\caption{\textbf{Arc-length parameterization algorithms.} Seven red points (with ordering displayed) sampled from the underlying blue 
curve; point 1 is the starting point, followed by the points 2, 3, $\cdots$, 7. The approximate  piecewise-linear curve constructed
by connecting these points is in black. 
Black points are sampled from black segments to compute the closest distance from an external point (in green) to the curve. }
\label{fig:algorithm_xy_arc}
\end{figure}

\begin{algorithm}[h!]
\caption{\label{alg:xy_to_arc_param} $\texttt{xy\_to\_arc\_param\_single}$:
converts an observed sample point's $(x,y)$-coordinates (from a closed curve) into its arc-length parameter value}
\KwData{\texttt{osamp} ($2\times (n+1)$ matrix, columns are $(x,y)$-coordinates
for $n$ ordered sample points from a closed curve in $\mathbb{R}^{2}$,
where final column is same as first column)} \KwIn{\texttt{xy\_coord}
($(x,y)$-coordinates for point which arc-length parameter is desired)
} \KwResult{\texttt{arc\_param} (arc-length parameter value for
the target point \texttt{xy\_coord})}

(1) Generate ``over-sampled'' piecewise linear curve \texttt{ospl}
($2\times\left((n+1)(N_{b}+1)-N_{b}\right)$ matrix) by linearly
interpolating $N_{b}$ points between each consecutive pair of points
contained in \texttt{osamp} to find the closest point to the target
point\;

\Indp \Begin{ \textbf{set} $N_{b}$ (e.g., $N_{b}=19$ is sufficient)\;

\tcc{final iteration connects final point to starting point}\
 \For{$i\leftarrow1$ \KwTo $n$}{ $\texttt{ospl}[:,i\cdot(N_{b}+1)-N_{b}]=\texttt{osamp}[:,i]$;\
 \tcc{direction to move to next sample point along segment}\
 $\texttt{vecd}=\texttt{osamp}[:,i+1]-\texttt{osamp}[:,i]$;\\

\For{$j\leftarrow1$ \KwTo $N_{b}$}{ $\texttt{ospl}[:,i\cdot(N_{b}+1)-N_{b}+j]\leftarrow\texttt{ospl}[:,i\cdot(N_{b}+1)-N_{b}+j-1]+\frac{j}{N_{b}+1}\texttt{vecd}$;\
 } }

$\texttt{ospl}[:,(n+1)(N_{b}+1)-N_{b}]=\texttt{osamp}[:,n+1]$;\
 }

\Indm (2) Compute the closest distance between \texttt{xy\_coord}
and each point in \texttt{ospl}\;

(3) Identify the point \texttt{cl\_ospl} along "over-sampled" piecewise
linear curve that is closest to \texttt{xy\_coord}\;

(4) Identify immediately preceding sample point \texttt{prev\_sp}
to \texttt{xy\_coord} along \texttt{ospl} and compute arc-length of
\texttt{ospl} up to \texttt{prev\_sp}\;

\textbf{return} \texttt{arc\_param} $=$ arc-length from starting
point to \texttt{prev\_sp} $+$ arc-length between \texttt{prev\_sp}
and \texttt{cl\_ospl}\\
\end{algorithm}

\begin{algorithm}[ht!]
\caption{\label{alg:arc_to_xy_param} $\texttt{arc\_to\_xy\_param\_single}$:
converts an arc-length parameter along a closed curve
into $(x,y)$-coordinates}
\KwData{\texttt{osamp} ($2\times (n+1)$ matrix, columns are $(x,y)$-coordinates
for $n$ ordered sample points from a closed curve in $\mathbb{R}^{2}$,
where final column is same as first column)} \KwIn{\texttt{arc\_param}
(arc-length parameter value for which $(x,y)$-coordinates are desired)}
\KwResult{\texttt{xy\_coord} ($(x,y)$-coordinates in $\mathbb{R}^{2}$
corresponding to \texttt{arc\_param})}

(1) Compute arc-length parameter values based on \texttt{osamp}\;

\Indp \Begin{ $\texttt{res}[1]=0$ \; \For{$i\leftarrow2$ \KwTo
$n+1$}{ $\texttt{res}[i]\leftarrow\texttt{res}[i-1]+\left|\texttt{osamp}[:,i]-\texttt{osamp}[:,i-1]\right|$\;
} }

\Indm (2) Identify sample point with arc-length parameter just preceding
(or equal to) \texttt{arc\_param}\;

\Indp $\texttt{dst}=\lvert\texttt{arc\_param}-\texttt{res}\rvert$\;

\texttt{previ} $=$ index of sample point $\texttt{prev\_sp}$ preceding
or equal to \texttt{arc\_param} based on \texttt{dst}\;

\Indm (3) Linear interpolation based on arc-length distance to \texttt{prev\_sp}\;

\Indp

\Begin{ $\texttt{prev\_dst}=\left|\texttt{arc\_param}-\texttt{res}[\texttt{previ}]\right|$\;

$\texttt{interval\_dst}=\texttt{res}[\texttt{previ}+1]-\texttt{res}[\texttt{previ}]$\;

$\texttt{rat}=\frac{\texttt{prev\_dst}}{\texttt{interval\_dst}}$;\\

$\texttt{xy\_coord}=\texttt{osamp}[:,\texttt{previ}]+\texttt{rat}\left(\texttt{osamp}[:,\texttt{previ}+1]-\texttt{osamp}[:,\texttt{previ}]\right)$;\\
 }

\Indm \textbf{return} \texttt{xy\_coord}.
\end{algorithm}

The intuition of our algorithm is "to project the point onto the piece-wise linear approximation to the curve". As stated in our algorithm, we enclose the input sample points by replicating the first column to the last column. This is crucial
in estimating the total arc-length.
If we do not enclose the input sample points, then we are treating the sample points as being drawn from an open curve. As a result, any points located between the observed last and first points in the sample will be mapped to either the first or the last point. This results in an error as large as the length of the segment between last and first point. In addition, the presented algorithms behave less consistently, due to dependence on the first and last sampled points. This error is generally small when we have dense sample points (which is the usual setup in elastic shape analysis), yet when there are finitely many sample points, this enclosing step becomes more crucial.

We wrap up this section by presenting the algorithm complexity for conversion algorithms. Although the complexity is straightforward from construction, we point out that these are linear in $n$, the number of sample points.
\begin{prop}
The complexity of Algorithms \ref{alg:xy_to_arc_param} and \ref{alg:arc_to_xy_param} are $O(N_b\cdot n)$ and $O(n)$, respectively.
\end{prop}

\subsection{Curve Length Estimation}
As the number of observed sample points increases, one obtains an increasingly accurate estimate of the underlying curve's arc-length. 
Recall that we use $y_{i1},y_{i2}$ for the $x$ and $y$-coordinates of observed sample point $i$, as in \eqref{eq:coordinate convetion}, and we identify the $(n+1)$-th point with the first point, i.e., $y_{n+1,d} \coloneqq y_{1d}$ for $d=1,2$, based on our closed curve assumption. Then, for underlying curve $\mathbf{f}=(f_1,f_2)$, its total arc-length is given by
$\ell(\mathbf{f})\coloneqq\int_{\mathbf{f}}ds=\int_{(x,y)\in \bm{f}}\sqrt{(\frac{df_1}{ds})^2+(\frac{df_2}{ds})^2} \ ds$. Thus, the absolute error for estimation of its total arc-length is bounded above by: 
\begin{align*}
\textcolor{black}{\Delta} &\textcolor{black}{\coloneqq \sum_{i=1}^{n}\left|\int_{s\in[s_i,s_{i+1}]}\sqrt{\left(\frac{df_1}{ds}\right)^{2}+\left(\frac{df_2}{ds}\right)^{2}}ds- \sqrt{(y_{i+1,1}-y_{i1})^2+(y_{i+1,2}-y_{i2})^2}\right|} \\
& \textcolor{black}{\leq \sum_{i=1}^{n}\left|(s_{i+1}-s_i)\max_{s\in[s_i,s_{i+1}]}\sqrt{\left(\frac{df_1}{ds}\right)^{2}+\left(\frac{df_2}{ds}\right)^{2}}- \sqrt{(y_{i+1,1}-y_{i1})^2+(y_{i+1,2}-y_{i2})^2}\right| \,,}
\end{align*}
where $f_1(s),f_2(s)$ are 
univariate $x,y$-coordinates functions with shared parameter $s$. As in the main text, we assume this parameter is the arc-length parameter, although other choices are allowed. This bound depends on the curvature of $\mathbf{f}(s)$ and the maximal difference between sample point coordinates. 
The total error bound above can be decomposed into error bounds which depend on consecutive ordered sample points \textcolor{black}{$(y_{i1},y_{i2})$} and \textcolor{black}{$(y_{i+1,1},y_{i+1,2})$} with arc-length parameters $s_i,s_{i+1}$: 
\begin{align*}
 \textcolor{black}{\tilde{\Delta}_i} & \textcolor{black}{\coloneqq  \left|(s_{i+1}-s_i)\max_{s\in[s_i,s_{i+1}]}\sqrt{\left(\frac{df_1}{ds}\right)^{2}+\left(\frac{df_2}{ds}\right)^{2}}- \sqrt{(y_{i+1,1}-y_{i1})^2+(y_{i+1,2}-y_{i2})^2}\right|} \\ 
& \textcolor{black}{\leq (s_{i+1}-s_i)\max_{s\in[s_i,s_{i+1}]}\sqrt{\left(\frac{df_1}{ds}\right)^{2}+\left(\frac{df_2}{ds}\right)^{2}} - \sqrt{(y_{i+1,1}-y_{i1})^2+(y_{i+1,2}-y_{i2})^2}} \\
& \textcolor{black}{\leq (s_{i+1}-s_i)\max_{s\in[s_i,s_{i+1}]}\left(\left|\frac{df_1}{ds}\right|+\left|\frac{df_2}{ds}\right|\right) - \sqrt{(y_{i+1,1}-y_{i1})^2+(y_{i+1,2}-y_{i2})^2} \,.}
\end{align*}
The absolute value can be removed because the shortest distance between two points is achieved by a straight line segment connecting them when in $\mathbb{R}^2$. 
The last inequality comes from the fact that $\sqrt{a + b} \leq \sqrt{a} + \sqrt{b}$ for positive $a,b$. The following result ensures that under a reasonable sampling scheme, our algorithm consistently estimates the total curve length.

\begin{theorem}
\label{thm:con_xy_arc}
Suppose that points \textcolor{black}{$(y_{i1},y_{i2})$} are sampled from the underlying curve defined by $(f_1,f_2)$, and both coordinate functions are first-order Lipschitz with constants $C_x,C_y>0$. 
The total estimation error $\Delta=\sum_{i=1}^{n}\tilde{\Delta}_i$ converges to 0 as $\max_{i}\|s_{i+1}-s_i\|\rightarrow 0$.
\end{theorem}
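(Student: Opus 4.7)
The plan is to decompose each local error $\tilde{\Delta}_i$ into two non-negative contributions and show each sums to zero as the mesh $\delta \coloneqq \max_i \|s_{i+1}-s_i\| \to 0$. Writing $M_i \coloneqq \sup_{s\in[s_i,s_{i+1}]} \sqrt{(f_1'(s))^2 + (f_2'(s))^2}$ and $L_i \coloneqq \sqrt{(y_{i+1,1}-y_{i1})^2 + (y_{i+1,2}-y_{i2})^2}$, the Lipschitz hypothesis makes $f_1,f_2$ absolutely continuous, so $L_i = \left|\int_{s_i}^{s_{i+1}} \mathbf{f}'(s)\,ds\right| \leq \int_{s_i}^{s_{i+1}} |\mathbf{f}'(s)|\,ds \leq (s_{i+1}-s_i)M_i$. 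Hence both factors inside the absolute value defining $\tilde{\Delta}_i$ are correctly ordered and we can write $\tilde{\Delta}_i = A_i + B_i$ with
\[
A_i \coloneqq (s_{i+1}-s_i)M_i - \int_{s_i}^{s_{i+1}} |\mathbf{f}'(s)|\,ds \geq 0, \qquad B_i \coloneqq \int_{s_i}^{s_{i+1}} |\mathbf{f}'(s)|\,ds - L_i \geq 0.
\]

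For the chord gap $\sum_i B_i$, note that $\sum_i \int_{s_i}^{s_{i+1}} |\mathbf{f}'(s)|\,ds = \int_0^{\ell} |\mathbf{f}'(s)|\,ds = \ell(\mathbf{f})$ does not depend on the partition, while the classical result on polygonal approximation of rectifiable curves (applicable since Lipschitz curves are rectifiable) yields $\sum_i L_i \to \ell(\mathbf{f})$ as $\delta \to 0$. Hence $\sum_i B_i \to 0$. This step is standard and I would cite an analysis text rather than reprove it.

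For the upper-sum gap $\sum_i A_i = \sum_i \int_{s_i}^{s_{i+1}} [M_i - |\mathbf{f}'(s)|]\,ds$, I would invoke uniform continuity of $|\mathbf{f}'|$ on the compact domain $\mathcal{D}$: given $\epsilon>0$ there exists $\delta_0$ such that $M_i - |\mathbf{f}'(s)| \leq \epsilon$ for every $s\in[s_i,s_{i+1}]$ whenever $\delta < \delta_0$, giving $\sum_i A_i \leq \epsilon\, \ell(\mathbf{f})$. Combining the two estimates, $\Delta = \sum_i A_i + \sum_i B_i \to 0$, as claimed. Finally, the original $\tilde\Delta_i$ bound stated in the excerpt used the coarser inequality $\sqrt{a+b}\leq\sqrt{a}+\sqrt{b}$; the proof above recovers the sharper conclusion at the level of the Euclidean magnitude, and the paper's coarser bound follows a fortiori.

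The main obstacle is a mild regularity subtlety. Under the stated Lipschitz-only hypothesis, $|\mathbf{f}'|$ exists only almost everywhere and is in general merely bounded measurable, so uniform continuity (hence Riemann integrability) of $|\mathbf{f}'|$ is not automatic and $\sum_i A_i \to 0$ may fail. I would therefore either (i) strengthen the hypothesis to $\mathbf{f}\in C^1$ so that $|\mathbf{f}'|$ is uniformly continuous, or (ii) retain Lipschitz regularity but approximate $\mathbf{f}$ by $C^1$ curves $\mathbf{f}_\eta$ via mollification, apply the argument above to $\mathbf{f}_\eta$, and close the loop by dominated convergence together with the uniform bound $|\mathbf{f}_\eta'| \leq \sqrt{C_x^2+C_y^2}$. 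All remaining ingredients are routine.
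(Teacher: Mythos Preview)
Your argument is correct and in fact cleaner than the paper's own proof. The paper bounds each $\tilde{\Delta}_i$ crudely by $(C_x+C_y)(s_{i+1}-s_i)-L_i$ and then asserts that because each of the two summands tends to zero ``uniformly'' one may invoke dominated convergence to conclude $\sum_i\tilde{\Delta}_i\to0$. That step is loose: the number of terms grows with the partition, and a bound of the form $\sum_i\tilde{\Delta}_i\le (C_x+C_y)\ell-\sum_iL_i$ does not obviously vanish without the very polygonal-approximation argument you supply. Your decomposition $\tilde{\Delta}_i=A_i+B_i$ into the upper-Riemann-sum defect and the chord-versus-arc defect isolates exactly the two limiting facts needed (Riemann integrability of $|\mathbf f'|$ for $\sum_iA_i$, rectifiability for $\sum_iB_i$) and makes the passage to the limit honest.

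On your regularity worry: note that in the paper's own proof the phrase ``first-order Lipschitz'' is unpacked as $\|f_d'(s)-f_d'(s')\|\le C_d\|s-s'\|$, i.e.\ the \emph{derivatives} are Lipschitz, so $\mathbf f\in C^{1,1}$. Under that reading $|\mathbf f'|$ is automatically uniformly continuous on the compact domain and your route (i) applies directly; the mollification detour (ii) is unnecessary. Your caution is well placed only if one reads the hypothesis as merely $\mathbf f$ Lipschitz, which is not the paper's intent.
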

\begin{proof}
We know that both coordinate functions are Lipschitz with constants $C_x,C_y>0$, i.e., $\| \frac{df_1}{ds}(s)-\frac{df_1}{ds}(s') \| \leq C_x\|s-s'\|,\| \frac{df_2}{ds}(s)-\frac{df_2}{ds}(s') \|\leq C_y\|s-s'\|$. 
Then, the pointwise error, as defined above, is bounded by:
\begin{align}
\label{eq:keydelta_i} 
\textcolor{black}{\tilde{\Delta}_i} & \textcolor{black}{\leq 
(C_x+C_y)\cdot (s_{i+1}-s_i) - \sqrt{(y_{i+1,1}-y_{i1})^2+(y_{i+1,2}-y_{i2})^2}} \nonumber\\
& \textcolor{black}{ = (C_x+C_y)\cdot (s_{i+1}-s_i) - \sqrt{(f_1(s_{i+1})-f_1(s_{i}))^2+(f_2(s_{i+1})-f_2(s_{i}))^2}} \,.\end{align}
The equality follows from the additional assumption that points are sampled from the underlying curve, though the algorithm holds without this assumption. From this bound, we immediately see that when $\max_{i}\|s_{i+1}-s_i\|\rightarrow 0$, the first summand in \eqref{eq:keydelta_i} converges to 0 uniformly. The second summand 
also converges to 0 uniformly due to the Lipschitz assumption. Using the dominated convergence theorem, we know the sum $\Delta=\sum_i^n \tilde{\Delta}_i$ also converges to 0.
\end{proof}
This error bound $\tilde{\Delta}_i$ suggests a uniform sampling over
the arc-length space tends to minimize the overall estimation error $\Delta=\sum_{i}\tilde{\Delta}_i$,
since both terms in the definition of $\Delta$ are minimized
by a uniform equally spaced sampling over a curve of fixed length, given a fixed number of sample points. A clustered sampling will maximize both terms for at least
one $i_{0}$ and break the condition that $\max_{i}\|s_{i+1}-s_i\|\rightarrow 0$ in the above consistency result, which supports Algorithm \ref{alg:xy_to_arc_param}. The following corollary follows from Algorithm \ref{alg:arc_to_xy_param}.
\begin{corollary}
\label{cor:arc-param-bound}
Under the same assumption of Theorem \ref{thm:con_xy_arc}, a point \textbf{on} the underlying curve of arc-length $s_0>0$ is mapped to the segment between two points \textcolor{black}{$(y_{n_0+1,1},y_{n_0+1,2})$ and $(y_{n_0,1},y_{n_o,2})$} such that:  \textcolor{black}{$$\sum_{i=0}^{n_0} \sqrt{(y_{i+1,1}-y_{i1})^2+(y_{i+1,2}-y_{i2})^2}\leq s_0 \leq\sum_{i=0}^{n_0+1} \sqrt{(y_{i+1,1}-y_{i1})^2+(y_{i+1,2}-y_{i2})^2} \,,$$} with accumulative error bounded from above by $\sum_{i=0}^{n_0+1}\tilde{\Delta}_i$.
\end{corollary}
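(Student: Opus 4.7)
The plan is to proceed in two stages, tracking the steps of Algorithm~\ref{alg:arc_to_xy_param} directly. Write $L_k := \sum_{i=0}^{k}\sqrt{(y_{i+1,1}-y_{i1})^2+(y_{i+1,2}-y_{i2})^2}$ for the cumulative chord-length estimate after $k{+}1$ inter-point segments, so that $L_k$ matches the $\texttt{res}$ array built in Step~1 of the algorithm under the paper's closed-curve indexing convention $y_{n+1,d}=y_{1d}$. The first task is to derive the bracketing inequality. By the minimization over $\texttt{dst}=|\texttt{arc\_param}-\texttt{res}|$ performed in Step~2, the returned index $n_0=\texttt{previ}$ is, by construction, the largest one satisfying $L_{n_0}\leq s_0$. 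Maximality forces the strict upper inequality $s_0 \leq L_{n_0+1}$, giving exactly the two-sided bound in the statement.

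The second stage quantifies the approximation of $s_0$ by these partial sums in terms of the per-segment quantities $\tilde{\Delta}_i$ defined in the proof of Theorem~\ref{thm:con_xy_arc}. Because the target point of arc-length $s_0$ lies on the underlying curve $\mathbf{f}$, the arc-length $s_0$ equals $\int_{s_0}^{s_{n_0}} \sqrt{(df_1/ds)^2+(df_2/ds)^2}\,ds$ plus a remainder on the final segment; I would split this integral into a telescoping sum over consecutive segments and compare each piece to its chord-length counterpart $\sqrt{(y_{i+1,1}-y_{i1})^2+(y_{i+1,2}-y_{i2})^2}$. The per-segment discrepancy is precisely $\tilde{\Delta}_i$, exactly as bounded in \eqref{eq:keydelta_i}. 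Summing over $i=0,1,\ldots,n_0+1$ and applying the triangle inequality yields $|s_0 - L_{n_0}|\leq \sum_{i=0}^{n_0+1}\tilde{\Delta}_i$ and likewise $|L_{n_0+1}-s_0|\leq \sum_{i=0}^{n_0+1}\tilde{\Delta}_i$, which is the claimed accumulated error bound.

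The last small step is to transfer this chord-length error bound to the output $(x,y)$-coordinate returned by Step~3. Since the linear interpolation in Step~3 simply places $\texttt{xy\_coord}$ on the chord between $(y_{n_0,1},y_{n_0,2})$ and $(y_{n_0+1,1},y_{n_0+1,2})$ at the correct fractional arc-length $\texttt{rat}$, the resulting point differs from the true on-curve point by at most $\tilde{\Delta}_{n_0+1}$, which is already dominated by $\sum_{i=0}^{n_0+1}\tilde{\Delta}_i$; hence no new error term arises and the bound carries over cleanly.

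The main obstacle I anticipate is not analytic but notational: aligning the $0$-based indexing used in the corollary's summations with the $1$-based sample convention used in the algorithm pseudocode, and ensuring the closed-curve wrap-around identification $y_{n+1,d}=y_{1d}$ is consistently applied in the telescoping argument. The Lipschitz hypothesis on $(f_1,f_2)$ inherited from Theorem~\ref{thm:con_xy_arc} is what keeps each $\tilde{\Delta}_i$ finite and vanishing as the mesh $\max_i \|s_{i+1}-s_i\|\to 0$, so the corollary's bound inherits the consistency guarantee of the parent theorem without further work.
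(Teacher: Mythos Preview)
Your approach is essentially what the paper does, only more explicit: the paper offers no formal proof beyond the single sentence ``The following corollary follows from Algorithm~\ref{alg:arc_to_xy_param},'' together with the remark that the algorithm ``may suffer from accumulative error'' controlled by the bounds established in Theorem~\ref{thm:con_xy_arc}. Your two-stage plan---(i) the bracketing inequality holds by construction of Step~2 of the algorithm, and (ii) the accumulated error is the telescoping sum of the per-segment discrepancies $\tilde{\Delta}_i$ from \eqref{eq:keydelta_i}---is exactly the intended unpacking.

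One small glitch worth fixing: in your second stage you write that ``the arc-length $s_0$ equals $\int_{s_0}^{s_{n_0}}\sqrt{(df_1/ds)^2+(df_2/ds)^2}\,ds$ plus a remainder,'' which has the integration limits garbled. Under the arc-length parameterization the integrand is identically~$1$, so the relevant comparison is simply between the true cumulative arc-lengths $s_k$ at the sample points and the chord-length estimates $L_k$; the telescoping sum you want is $s_{n_0+1}-L_{n_0+1}=\sum_{i\le n_0+1}(\,|s_{i+1}-s_i|-\text{chord}_i\,)$, each summand of which is dominated by $\tilde{\Delta}_i$. Your observation about the $0$-based versus $1$-based indexing and the wrap-around convention is well taken and is indeed the only place one can slip up.
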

Algorithm \ref{alg:arc_to_xy_param} may suffer from accumulative error, but also enjoys consistency, as expressed in terms of error bounds above, when $\max_{i}\|s_{i+1}-s_i\|\rightarrow 0$.

Figure \ref{fig:curve_sample_size_and_scheme} compares a (preferred) equally spaced sampling, defined with respect to the arc-length metric along the underlying curve, to clustered samplings; the latter results in unusual GP fits with substantial uncertainty in regions where the underlying curve is not adequately sampled. 

\begin{figure}[t]
\centering
\begin{tabular}{ccc}
\hline
\includegraphics[width=0.3\textwidth]{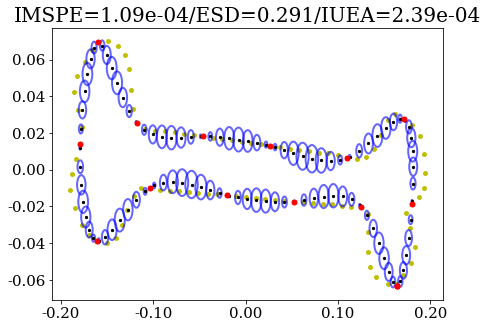}&\includegraphics[width=0.3\textwidth]{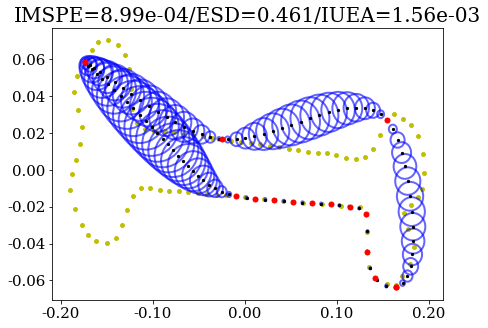}&\includegraphics[width=0.3\textwidth]{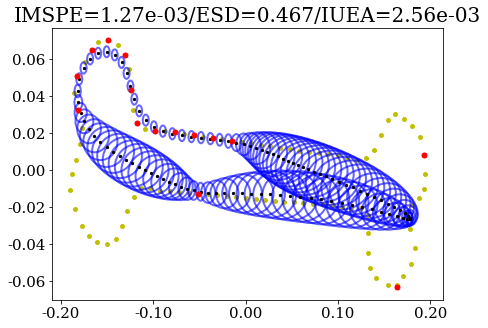}\\
\hline
\end{tabular}
\caption{\textbf{Effect of sampling scheme on GP model fits.} Baseline model fit for a bone curve from MPEG-7, observed at 15 points.
Observed points follow an equally spaced sampling scheme (left), and different cluster sampling schemes (middle and right panels).}
\label{fig:curve_sample_size_and_scheme}
\end{figure}

\section{Gaussian Processes}
\label{sec:SOGP}
Assume we observe $n$ input-output pairs, $(s_{i},y_{i})$, where $s_{i}\in\mathcal{D}$ and $y_{i}\in\mathbb{R}$ (hence "single-output"). 
A Gaussian process (GP) regression model relates input variable $s_{i}$ to corresponding output with noise $y_{i}$ through an unknown target function $f$: 
\begin{align}
y_{i}=f(s_{i})+\epsilon_{i}, \quad \epsilon_{i}\overset{\text{iid}}{\sim}\mathcal{N}_1(0,\sigma_{\epsilon}^{2}), \quad i=1,\ldots,n ,
\end{align}
where $\epsilon_{i}$ represents observation error. In vector notation, we have $\mathbf{y}=f(\mathbf{s})+\boldsymbol{\epsilon}$, where $\mathbf{y}=(y_{1},\ldots,y_{n})^{T}$, $\mathbf{s}=(s_{1},\ldots,s_{n})^{T}$, $f(\mathbf{s})=\left(f(s_{1}),\ldots,f(s_{n})\right)^{T}$, and $\boldsymbol{\epsilon}\sim\mathcal{N}_{n}(\mathbf{0}_{n},\sigma_{\epsilon}^{2}\mathbf{I}_{n})$. The primary goal is to estimate the underlying $f$, which is not assumed to have any particular shape or parametric form.

Following the convention by \cite{Snelson&Ghahramani2006}, we assume a priori that the mean vector $f(\mathbf{s})$ is a realization of a zero mean GP, $f\sim\mathcal{GP}(0,k)$, for covariance kernel $k(\cdot,\cdot):\mathcal{D}\times\mathcal{D}\rightarrow\mathbb{R}$. The kernel $k(s_i,s_{p})=\text{Cov}\left( f(s_i),f(s_{p}) \right)$ models local correlations of the unknown target function $f$, which in turn ultimately controls its smoothness. Samples from this GP follow a joint multivariate normal distribution, $f(\mathbf{s})\sim\mathcal{N}_{n}(\mathbf{0}_{n},\mathbf{K}_{n})$, where $\mathbf{K}_{n}=\left[k(s_{i},s_{p})\right]_{i,j=1}^{n}$ is the $n$-dimensional covariance matrix. GPs are fully specified by the mean and covariance functions, and typically the mean function is set to be identically zero a priori, reflecting lack of knowledge. 

The form of covariance kernel $k$ is often chosen from a specific family (e.g., radial basis function, Matern family) which has its own set of hyperparameters, and the choice of kernel can vary for different applications. For instance, consider the radial basis function (RBF) kernel:
$k_{(\sigma^2,\rho)}(s_i,s_{p}) = \sigma^2 \exp \left\{ -\| s_i-s_{p} \|^2/(2\rho^2)  \right\}$.
This kernel has two hyperparameters: $\sigma^2$ scales the covariance appropriately with output variance, and $\rho$ is the length scale, which controls how quickly the dependence decays. 

We take the frequentist approach in GP model fitting, where kernel hyperparameters are estimated by optimizing the marginal log-likelihood. Bayesian approaches to fitting also exist, but are not investigated in the current work. %
Once these hyperparameters have been estimated, predictions at a new input $s^*$ have mean and variance:\begin{equation}
\begin{gathered} 
\tilde{\mu}(s^*) = \mathbf{K}(s^*,\mathbf{s})\left( \mathbf{K}_n+\sigma^2_{\epsilon}\mathbf{I}_n \right)^{-1}\mathbf{y} \\
\tilde{\sigma}^2(s^*) = k(s^*,s^*)-\mathbf{K}(s^*,\mathbf{s})\left( \mathbf{K}_n+\sigma^2_{\epsilon}\mathbf{I}_n \right)^{-1}\mathbf{K}(s^*,\mathbf{s})^\top,
\end{gathered}
\label{eq:GP_mean_cov}
\end{equation}
where $\mathbf{K}(s^*,\mathbf{s}) = [k(s^*,s_i)]_{i=1}^n$ is a $n$-dimensional vector of kernel evaluations between the new input for prediction and observed inputs. The predictive mean $\tilde{\mu}(s^*)$ is a kernel-weighted combination of observed outputs, and its predictive uncertainty is given by $\tilde{\sigma}^2(s^*)$.

%
\section{Bounds on Periodic Kernels}
\label{sec:proof of main thm}
When defining kernels on arbitrary manifolds, the often-convenient Bochner theorem may not be suitably applied to ensure kernels are positive semi-definite %
\citep{borovitskiy2020matern}; thus, kernel methods, including GP models, cannot be directly applied to closed manifolds. However, with careful constraints, the periodic kernel can be used for closed planar curves with no issue. %

The following result allows us to compare the behavior of a periodic kernel with its non-periodic counterpart. With this result, and our practice that we approximate the period parameter using curve length, we can prove consistency of our model, as shown in Section \ref{sec:proof of consistency thm} below.

\begin{lemma} \label{thm:bounds on periodic kernels} Given true curve length $\ell$, suppose inputs $s_{i},s_{p} \in \mathcal{D}$ are within a period, i.e., $0 \leq r \leq\tau$ for $r=\|s_{i}-s_{p}\|$. Then, 
\[
\sigma^{2}\left( 1-\frac{\pi^2 \ell^2}{4\rho\tau^2} \right)\leq k_{(\sigma^{2},\rho,\tau)}(s_{i},s_{p})\leq\sigma^{2}\left(1+\frac{1}{64}\left(\frac{2\pi^{4}}{\rho^{2}\tau^{4}}+\frac{4\pi^{4}}{3\rho\tau^{4}}\right)\ell^{4}\right) \,.
\]
\end{lemma}

Using this theorem, we can introduce some required and practical periodic constraints between kernel hyperparameters. First, the consideration of consistency requires $\tau \leq\ell$, since we want the dependence to be identical after fully tracing the closed curve. This ensures identifiability of length scale and period hyperparameters. Second, no two points on the curve can be more than arc-length distance $\ell/2$ apart along the closed curve. Thus, heuristically, the correlation should decay to nearly zero before the distance $\ell/2$, to allow two points to be uncorrelated. Finally, from empirical studies, we suggest  $\rho \apprle\frac{\tau}{2}$ to ensure identifiability between these two parameter estimates. For most kernels, we find this constraint to be suitable in practice, as shown by the trade-off between length scale and period in Figure \ref{fig:period_lengthscale comparison}. 

\begin{figure}[t!]
\centering \includegraphics[width=0.75\textwidth]{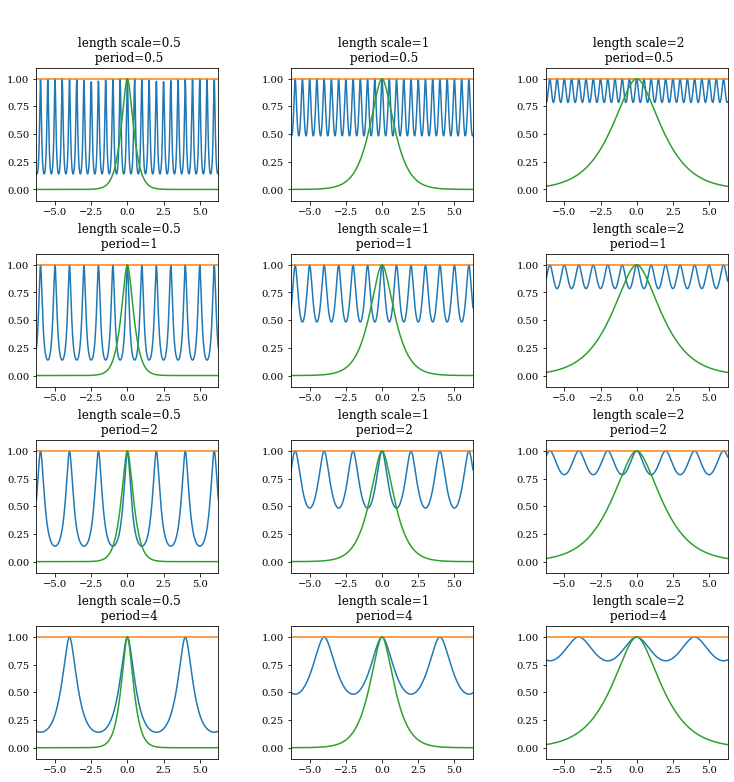}
\caption{\textbf{Trade-off between length scale and period kernel hyperparameters.} Comparison of kernel evaluations for different length scale and period hyperparameters (where applicable). The periodic Matern 3/2 kernel is in blue, the Matern 3/2 kernel in green, and the value 1
in orange (i.e., the value of a degenerate kernel with 0 or infinite
period). The $x$-axis is the arc-length $r$ along the curve between two points $s_i,s_{p} \in \mathcal{D}$ and the $y$-axis is the kernel evaluation $k(0,r)$.) }
\label{fig:period_lengthscale comparison} 
\end{figure}

\begin{proof}
For $r\coloneqq\|s_{i}-s_{p}\|\in(0,\ell/2)$, we first factor out $\sigma^2$ of the stationary periodic covariance kernel $k_{(\sigma^2,\rho,\tau)}(s_i,s_{p})=k_{(\sigma^2,\rho,\tau)}(0,r)$, defined by Equation \eqref{eq:periodic cov ker}. Then, consider the Taylor expansion of this function at $r=0$:
\begin{align*}
\exp\left(-\frac{1}{\rho}\sin^{2}\left(\frac{r}{\tau/\pi}\right)\right) & =1-\frac{\pi^{2}}{\rho\tau^{2}}\cdot r^{2}+\frac{1}{4}\left(\frac{2\pi^{4}}{\rho^{2}\tau^{4}}+\frac{4\pi^{4}}{3\rho\tau^{4}}\right)\cdot r^{4}-O(r^{6})\\
\implies \exp\left(-\frac{1}{\rho}\sin^{2}\left(\frac{r}{\tau/\pi}\right)\right) & \geq1-\frac{\pi^{2}}{\rho\tau^{2}}\cdot r^{2}\\
\implies \exp\left(-\frac{1}{\rho}\sin^{2}\left(\frac{r}{\tau/\pi}\right)\right) & \leq1+\frac{1}{4}\left(\frac{2\pi^{4}}{\rho^{2}\tau^{4}}+\frac{4\pi^{4}}{3\rho\tau^{4}}\right)\cdot r^{4} \,.
\end{align*}
Therefore, 
using the fact that $r =\|s_i-s_p\| < \ell/2$, we obtain the lower bound:
\begin{align}
\begin{split}k_{(\sigma^{2},\rho,\tau)}(s_{i},s_{p}) & \geq\sigma^{2}\left(1-\frac{\pi^{2}}{\rho\tau^{2}}\cdot\|s_{i}-s_{p}\|^{2}\right)\geq\sigma^{2}\left( 1-\frac{\pi^2 \ell^2}{4\rho\tau^2} \right) \,,\end{split}
\label{eq:lowerbound}
\end{align}
Similarly, we get the following upper bound:
\begin{align}
\begin{split}k_{(\sigma^{2},\rho,\tau)}(s_{i},s_{p}) & \leq\sigma^{2}\left(1+\frac{1}{4}\left(\frac{2\pi^{4}}{\rho^{2}\tau^{4}}+\frac{4\pi^{4}}{3\rho\tau^{4}}\right)\cdot\|s_{i}-s_{p}\|^{4}\right)\leq\sigma^{2}\left(1+\frac{1}{64}\left(\frac{2\pi^{4}}{\rho^{2}\tau^{4}}+\frac{4\pi^{4}}{3\rho\tau^{4}}\right)\cdot\ell^{4}\right) \,.\end{split}
\label{eq:upperbound}
\end{align}
\end{proof}

One way to compare the periodic kernel to its non-periodic counterpart is by minimizing the difference between the
upper bound \eqref{eq:upperbound} and the lower bound \eqref{eq:lowerbound}, so that the periodic kernel behaves like its non-periodic counterpart (i.e., squared exponential), such
that a pair of points at maximal distance $\ell/2$ can
attain a small kernel value, and any oscillations of kernel values are dampened within a period. If the difference between upper and lower bounds is large, then there may exist many oscillations within a period, as shown in Figure \ref{fig:period_lengthscale comparison}. On the other hand, we also observe a trade-off between the periodic parameter $\tau$ and the length scale parameter $\rho$ in the upper bound. 
For the upper bound, when we assume that $\tau=\delta\ell$ for some $\delta\approx1$,
the upper bound \eqref{eq:upperbound} reduces to $\sigma^{2}\left(1+\frac{1}{64}\left(\frac{2\pi^{4}}{\rho^{2}\delta^{4}}+\frac{4\pi^{4}}{3\rho\delta^{4}}\right)\right)$.

\section{Consistency Theorem}\label{sec:proof of consistency thm}
We prove the following theorem for a single curve, but expect this consistency result to hold for multiple curves.
\begin{theorem}
Let $f_{1},f_{2}:\mathcal{D}\rightarrow\mathbb{R}$ be the true coordinate functions for a closed curve. Also, let $\hat{f}_{1,n},\hat{f}_{2,n}:\mathcal{D}\rightarrow\mathbb{R}$ be the mean coordinate functions based on a sample of size $n$ on $\mathcal{D}$, as in \eqref{eq:GP_mean_cov}, with additive Gaussian noise (i.e., (4) in \cite{koepernik2021consistency}) parameterized by variance $\sigma_{\epsilon}^{2}$. Assume that the following assumption holds:\newline
(1) $\mathcal{D}$ is homeomorphic to $\mathcal{S}^{1}$, with a metric isometric to the arc-length metric on $\mathcal{S}^1$.\newline
(2) Stationary periodic kernel $k_{(\sigma^{2},\rho,\tau)}$ defined in \eqref{eq:periodic cov ker} has finite, nonzero hyperparameter values $(\sigma^{2},\rho,\tau)$.\newline
(3) The sequence of sample points indexed by sample size $n$ is a dense sequence in $\mathcal{D}$.\newline
Then, the following statements hold:\newline
(A) If $f_{1},f_{2}$ belong to a reproducing kernel Hilbert space \citep{aronszajn1950theory} defined by $k_{(\sigma^{2},\rho,\tau)}$, then $\hat{f}_{1,n}\overset{L_{2}}{\rightarrow}f_{1}$ and $\hat{f}_{2,n}\overset{L_{2}}{\rightarrow}f_{2}$.\newline
(B) If $f_{1},f_{2}$ are continuous with respect to the topology induced by $\|\cdot\|$ in Equation \eqref{eq:periodic cov ker} for $k_{(\sigma^{2},\rho,\tau)}$, then $\sup_{s\in \mathcal{D}}\left|\hat{f}_{1,n}(s)-f_{1}(s)\right|\overset{L_{1}}{\rightarrow}0$ and $\sup_{s\in \mathcal{D}}\left|\hat{f}_{2,n}(s)-f_{2}(s)\right|\overset{L_{1}}{\rightarrow}0$.
\label{thm:consistency thm}
\end{theorem}
\begin{proof}

Recall that the domain we consider for closed curves is
$\mathcal{D}=\mathcal{S}^{1}$. When endowed with an intrinsic
metric, $\mathcal{S}^{1}$ is separable, since we can find points corresponding to rational degrees dense on $\mathcal{S}^{1}$. 
It is also $\sigma$-compact,
since $\mathcal{S}^1$ can be represented as a countable union of closed balls centered
at these points with rational degrees. Any parameterization $s$ that
is isometric to the arc-length metric on $\mathcal{S}^1$ will also induce a separable
and $\sigma$-compact metric space $\mathcal{D}$ . This verifies
the conditions of Theorem 3 in \cite{koepernik2021consistency}
and leads to the $L_2$ convergence results of Statement (A).%

Next, we verify the conditions of Theorem 8 in \cite{koepernik2021consistency}.
In conjunction with Proposition 7 in the same paper, it suffices to verify that the
Minkowski dimension of $\mathcal{D}$ is finite and the kernel $k_{(\sigma^{2},\rho,\tau)}$
is locally Lipschitz. The finiteness of Minkowski dimension follows either by definition (e.g., Definition 3.1 in \cite{falconer2004fractal},
see also \cite{Luo_etal2019} for a discussion for
lower dimensional embedded curves) or by direct computation of the Dudley integral using the natural covering of $\mathcal{S}^1$ consisting of balls centered at rational degrees. Lipschitz continuity follows
from the bounds, specifically \eqref{eq:upperbound}, in the proof of Lemma 
\ref{thm:bounds on periodic kernels}. The periodic
kernel defined in \eqref{eq:periodic cov ker} is
continuous for fixed, estimated finite, non-zero hyperparameters $(\sigma^{2},\rho,\tau)$.
Then, by Proposition 7 in \cite{koepernik2021consistency}, we know that the Dudley integral $J(C,\|\cdot\|)=\int_0^{\infty}\sqrt{\log N(C,\varepsilon,\|\cdot\|)}d\varepsilon<\infty$ for all compact sets in $\mathcal{D}=\mathcal{S}^1$. Inside the integral, the integrand $N(C,\varepsilon,\|\cdot\|)$ is the covering number of $C$ using $\varepsilon$-balls with respect to the metric $\|\cdot\|$ defined by the periodic kernel $k_{(\sigma^{2},\rho,\tau)}$. 

Using Condition (i) of Proposition 4 %
 and Theorem 8 in \cite{koepernik2021consistency},
with all of its assumptions verified above, we can claim that the
posterior GP is continuous for all sample sizes as in Statement (B). %
Furthermore, let $\prod_{n}=\mathcal{GP}(\hat{f}_{d,n},k_{n})$ be the product measure based on mean coordinate function $\hat{f}_{d,n}$ (for $d=1,2$) and covariance function $k_{n}$ (i.e., \eqref{eq:GP_mean_cov}) under a sample of size $n$. 
As an element defined on a functional space, this product measure satisfies $\prod_{n}(U)\overset{L_{1}}{\rightarrow}1$
for every open neighborhood $U$ of the continuous $f_{1},f_{2}$. However, this aspect of consistency is less relevant in our discussion of sampling. 
\end{proof}

\section{Stationary versus Non-Stationary Kernels}
\label{subsec:NonStat}

As mentioned in the main text, the choice of kernel family is important to fitting GP models, as it explicitly controls the smoothness of the underlying function. The left three panels of
Figure \ref{fig:kernel_comp} shows the fits of a butterfly curve under three different periodic stationary kernels: RBF, Matern 3/2, and Matern 1/2, which assume high to low smoothness, respectively, in the underlying curve fit. The RBF kernel fit does not appear to be suitable for modeling this noisily sampled butterfly due to its overly smooth behavior, which results in two self-intersections in the predictive mean curve. The Matern 3/2 kernel yields a fit with the rightmost self-intersection resolved, and increased uncertainty. The Matern 1/2 kernel results in the last smooth predictive mean curve, at the cost of increased uncertainty (as represented by ellipsoids). However, this sacrifice may be worthwhile given the limited number of observed sample points here, as both self-intersections disappear.

An alternative is to use non-stationary kernels \citep{ paciorek2003nonstationary}, which relax the stationarity assumption and increase the flexibility in kernel hyperparameter values, which can vary along the input space. Precisely, a stationary kernel can be expressed as solely a function of the input difference, whereas a non-stationary kernel cannot be. Incorporating periodicity constraints into a non-stationary kernel is not straightforward as it is for stationary kernels; thus, consistency (i.e., Theorem \ref{thm:consistency thm}) cannot be guaranteed, and we cannot expect $\mathbf{f}(0)=\mathbf{f}(\ell)$ in general.
This is evident for the butterfly in the right panel of Figure \ref{fig:kernel_comp}, which uses the non-stationary arc-cosine kernel of order 0, defined in Equation (1) of \cite{Cho2009KernelMF}.  The slight noise perturbation significantly alters estimates of the arc-cosine kernel's hyperparameters, resulting in a curve fit which is not closed. This behavior does not occur as drastically for more densely sampled curves, but we have observed that periodic stationary kernels are more robust in general.

\begingroup
\renewcommand{\arraystretch}{0.5} %
\begin{figure}[t!]
\centering 
\begin{tabular}{cccc}
\hline
periodic RBF & periodic Matern 3/2 & periodic Matern 1/2 & arc-cosine\\
\hline
\includegraphics[width=0.23\textwidth]{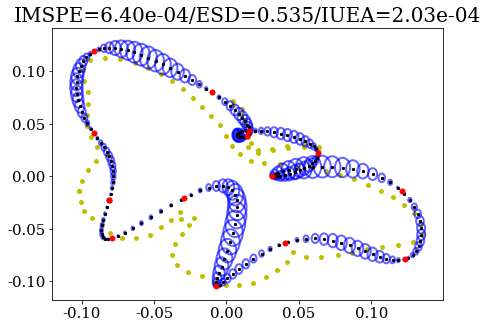}&\includegraphics[width=0.23\textwidth]{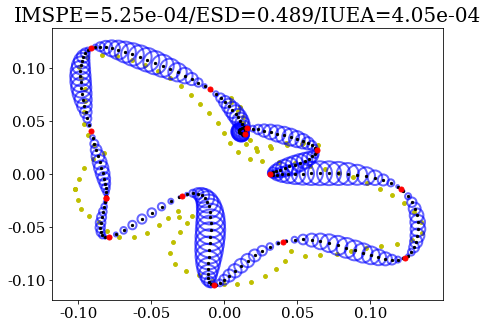}&\includegraphics[width=0.23\textwidth]{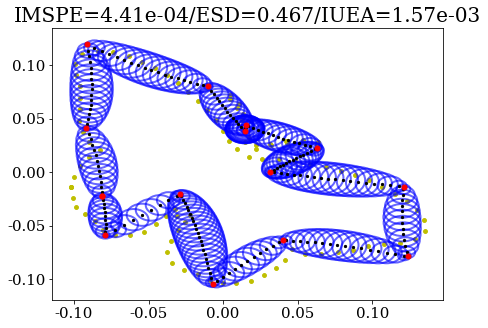}&\includegraphics[width=0.23\textwidth]{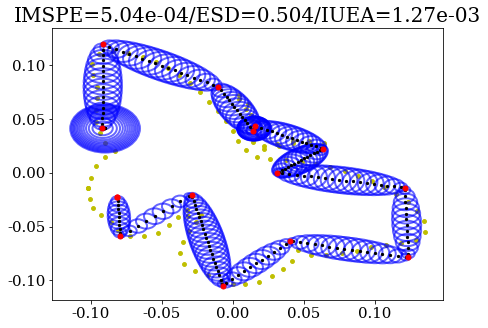}\\
\hline
\end{tabular}
\caption{\textbf{Effect of kernel choices on GP model fits.} Baseline model fit under various periodic stationary (left 3 panels) and non-stationary kernels (right panel) for a butterfly curve from MPEG-7 sampled at 20 points, perturbed pointwise by uncorrelated bivariate normal noise with standard deviation $0.008$.}
\label{fig:kernel_comp}
\end{figure}
\endgroup

The choice of kernel should consider both the sampling scheme of points from the underlying curve, and the smoothness requirement. 
Figure \ref{fig:twotwo_table} shows lizard curve fits for all combinations of either sparse or dense observed sample points under either stationary (periodic Matern 3/2) or non-stationary (arc-cosine of order 0) kernels. If a sparse set of observed points are available, we recommend using a periodic stationary kernel, which guarantees consistency, shows kriging behavior, and can generally result in smoother fits. In comparison, the non-stationary kernel does appear to produce a consistent curve in this case, but the fit is not very smooth given the limited number of observed sample points. On the other hand, if the set of observed points is dense, we recommend either periodic stationary or non-stationary kernels, as both are able to capture finer details of the lizard curve with reduced uncertainty.

\begingroup
\renewcommand{\arraystretch}{0.5} %
\begin{figure}[t!]
\centering 
\begin{tabular}{@{}c@{}|@{}c@{}|@{}c@{}}
\hline
 & \centered{sparse} & \centered{dense}\\
\hline
\centered{stat} & \centered{\includegraphics[width=0.4\textwidth]{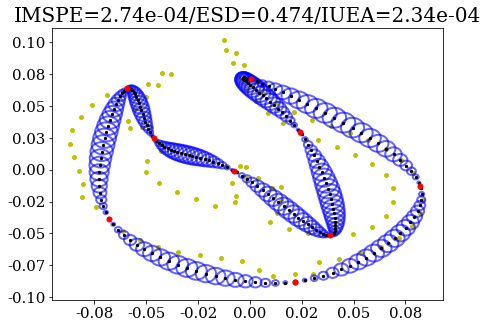}} & \centered{\includegraphics[width=0.4\textwidth]{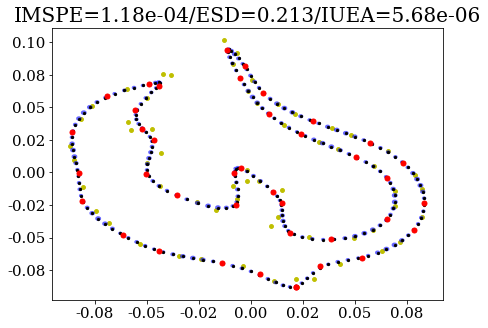}}\\
\hline
\centered{non-stat} & \centered{\includegraphics[width=0.4\textwidth]{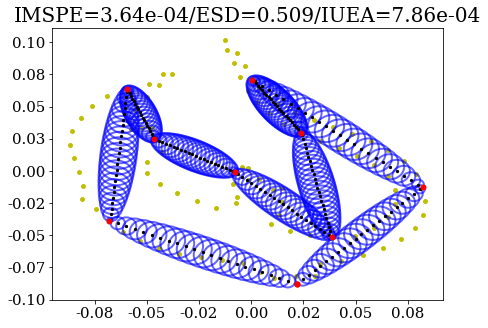}} & \centered{\includegraphics[width=0.4\textwidth]{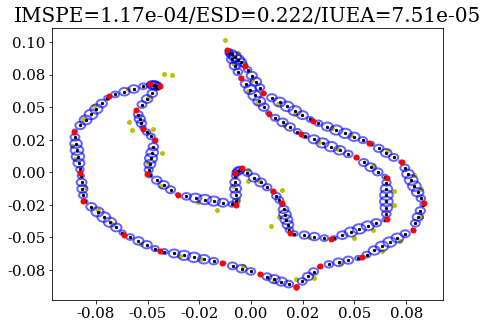}}\\
\hline
\end{tabular}
\caption{\textbf{Comparison of stationary and non-stationary fits by sampling scheme.} Lizard curve from MPEG-7 sampled at either 10 (sparse) or 40 (dense) observed points, and modeled using a multiple-output GP under periodic Matern 3/2 (stationary) or arc-cosine of order 0 (non-stationary) kernels.}
\label{fig:twotwo_table}
\end{figure}
\endgroup

\section{\label{sec:ESA}Elastic Shape Analysis}

Elastic shape analysis \citep{srivESA,kurtek2012statistical} provides a set of tools for modeling the shape, viewed as equivalence classes in a shape space, of curves. An important step within elastic shape analysis is the optimal registration of curves (i.e., by rotation, translation, re-scaling, and re-parameterization) so that points on each curve represented by curve parameter value $s$ are in correspondence. Elastic registration is performed by optimization under what is known as the elastic metric, and typical numerical implementations assume curves are sampled at the same number of points. 
\cite{srivESA} showed that the convenient square-root velocity transformation can be used to simplify calculations of the elastic metric, improving the numerical efficiency of registration.

Formally, given two closed curves $\mathbf{f}^{(j)}: \mathcal{D} \rightarrow \mathbb{R}^2$ for $j=1,2$, the square-root velocity transformation,
\begin{equation}
F\left(\beta^{(j)}(t)\right) \equiv \frac{\dot{\beta}^{(j)}(t)}{\vert \dot{\beta}^{(j)}(t) \vert} \,,
\label{eq:SRVtran}
\end{equation}
is used to define the \emph{square-root velocity function (SRVF)} of the $j^{\text{th}}$ curve $\mathbf{f}^{(j)}(s)$ by $\mathbf{q}^{(j)}(s)=F\left(\mathbf{f}^{(j)}(s)\right)$. Then, pairwise elastic registration of $\mathbf{f}^{(2)}$ to $\mathbf{f}^{(1)}$ means solving the following optimization problem:
\begin{equation}
    (O^*,\gamma^*) = \underset{O \in SO(2),\gamma \in \Gamma}{\text{argmin}}\ \vert\vert \tilde{\mathbf{q}}^{(1)}-O(\tilde{\mathbf{q}}^{(2)} \circ \gamma)\sqrt{\dot{\gamma}} \vert\vert^2 \,,
    \label{eq:ElasticOpt}
\end{equation}
where $\Gamma$ is the space of orientation-preserving re-parameterizations of curve domain $\mathcal{D}$, $\tilde{\mathbf{q}}^{(j)}$ denotes the re-scaling of $\mathbf{q}^{(j)}$ to unit length, $\circ$ denotes function composition, and $\vert\vert \cdot \vert\vert$ denotes the $\mathbb{L}^2$ norm. In practice, this requires curves (and thus corresponding SRVFs) to be sampled by the same number of points, and optimization proceeds by alternating between finding the optimal rotation $\tilde{O} \in SO(2)$ conditional on a fixed re-parameterization $\tilde{\gamma} \in \Gamma$, and then finding the optimal re-parameterization $\tilde{\gamma} \in \Gamma$ conditional on a fixed rotation $\tilde{O}$ until a convergent pair is reached. Further details of this can be found in \cite{srivESA}. For our purposes, it suffices to know that numerical solutions to Equation \eqref{eq:ElasticOpt} rely on curves to be sampled at an equal number of points, and can be poorly conditioned for noisy curves due to the computation of SRVFs, which involves numerical differentiation.

\section{Elastic Shape Registration}
\label{subsec:ElasticShapeReg}
\begin{figure}[t!]
\centering \begin{tabular}{cc}
\hline
\includegraphics[width=0.4\textwidth]{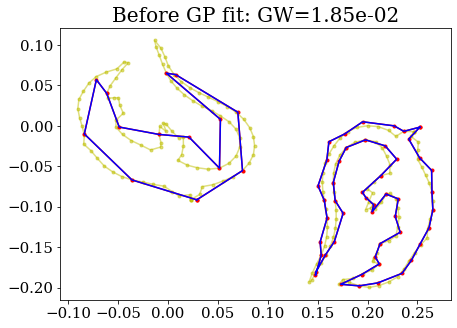}&\includegraphics[width=0.4\textwidth]{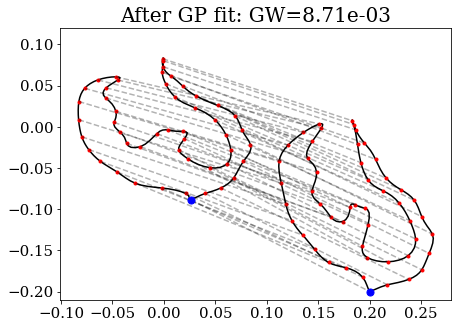}\\
\hline
\end{tabular}
\caption{\textbf{Elastic registration of curves observed at different number of sample points with noise.} (Left) Two lizard curves from MPEG-7, sampled at 15 and 50 points (red), perturbed pointwise by uncorrelated, bivariate normal noise with standard deviation 0.005. (Right) Correspondence (dashed lines) between camels from elastic registration based on predictive mean curves from the jointly fit multiple-output GP, sampled at 100 points each with every 4 points marked red. The matched start point (seed) is in blue.  
}\label{fig:elastic_reg2} 
\end{figure}
Consider the two lizard curves in Figure \ref{fig:elastic_reg2}, sampled at 15 and 50 points, respectively, both equally-spaced and perturbed by bivariate Gaussian noise. 
We fit a multiple-output GP to exploit the structural similarity between the two lizards, obtaining predictive mean curves. These are then densely sampled at the same number of points for registration. Our proposed method is a natural, more robust way %
of drawing an equal number of dense samples along smoother fitted mean curves, as compared to arbitrary re-sampling. The Gromov-Wasserstein (GW) distance (Supplementary Material \ref{sec:Metrics}) between registered curves can be used to confirm this by quantifying ``how closely aligned'' two point clouds are. Unlike elastic shape distance, the GW distance can be computed without relying on curve interpolation. We saw that GP
fitting substantially reduces GW distance for corresponding point clouds and provide more interpretable functional summaries. %
An important task for statistical modeling of curves is shape registration, where the goal is to place curves in an optimal pointwise correspondence with each other by applying various shape-preserving transformations. This usually  %
requires curves to be densely sampled at the same number of points. Fitting the multiple-output GP model, followed by re-sampling the predictive mean, allows for shape registration in a wide variety of more challenging settings, e.g., if the original data is of low quality, sparsely or unevenly sampled as seen in Figure \ref{fig:elastic_reg2} and \ref{fig:curve_rep}. 

\section{Preprocessing for Multiple Curves}
\label{subsec:PreProc}

When considering a joint multiple-output Gaussian process model with between-curve dependence, we recommend the following preprocessing steps prior to fitting, as illustrated by Figure \ref{fig:multishape_comparison_camel}:
\begin{enumerate}
    \item \emph{Centering:} This translates curves to have its center of mass at the origin in Euclidean space. Empirically, this step does not affect the quality of the model, but assists in visualization and exploratory analysis. 
\item \emph{Scaling:} This places curves on the same scale by multiplying by a factor inversely proportional to its estimated total length based on observed sample points. This step affects the absolute values of model parameter estimates.
\item \emph{Rotational and seed alignment:} Since the multiple-output model borrows information across Euclidean coordinates within a fixed curve, it is important to ensure that these dependencies are fairly consistent across curves. Rotating curves to match a pre-specified template as best as possible is one way to achieve this. Within this process, it is also necessary to identify similar ``starting points'' (referred to as ``seeds'') across curves in the vector of observed sample points. This can be performed simultaneously with rotational alignment.
\end{enumerate}

More formally, let $\mathbf{Y}^{(j)}$ be observed sample points from curve $j$. Its centroid can be computed as $\mathbf{\bar{y}}^{(j)} = \left( \bar{y}^{(j)}_{\cdot 1}, \bar{y}^{(j)}_{\cdot 2} \right)$, where $\bar{y}^{(j)}_{\cdot d} = \frac{1}{n_j} \sum_{i=1}^{n_j} y_{id}^{(j)}$ is the average over coordinate $d \in \{ 1,2 \}$. Then, the transformation $\mathbf{Y}^{(j)} \mapsto \mathbf{Y}^{(j)}-\mathbf{\bar{y}}^{(j)}$ shifts the sample points to have centroid at the origin in Euclidean space. 
In landmark shape analysis \citep{kendall_shape,dryden2016statistical}, this step is one way to impose translation invariance, a required first step in mapping point sets to a shape space.

Standardizing the scale of observed sample points is crucial, and is also a necessary step for imposing scale invariance in landmark shape analysis. With respect to GP model fitting, we remarked in Section \ref{subsec:Closed} that for closed curves, a periodic kernel with period close to the true arc-length of the underlying curve yields reasonable closed curve fits. If multiple curves are sharing the same input kernel, inverting the covariance matrix is more numerically stable if their scales are roughly the same.

To standardize the scale 
of observed sample points associated with curve $j$, we divide by the arc-length of the piecewise-linear curve formed by connecting points in $\mathbf{Y}^{(j)}$ by straight line segments (as shown in Figure \ref{fig:algorithm_xy_arc} and Algorithms \ref{alg:xy_to_arc_param}, \ref{alg:arc_to_xy_param}). Let this length be $\ell_j$; then, the mapping $\mathbf{Y}^{(j)} \mapsto \frac{1}{\ell_j}\mathbf{Y}^{(j)}$ re-scales the total length of the segments as the edges of the polygons to be 1. 

Finally, rotational alignment is often desired for both mapping to a landmark shape space as well as visualization purposes. 
Prior to fitting a joint multiple-output GP, we rotate curves $j \geq 2$ to best match a ``canonical'' template curve $j=1$ chosen by a user.  This canonical template curve does not have to be the first curve; for instance, if a well-defined ``average'' curve or some other pre-specified template curve is available, the sample of curves can be rotationally aligned to this template instead.
When performing rotational alignment, there is also a need to identify the ``best'' starting point, i.e., the observed sample point on curves $j \geq 2$ which best matches the first observed sample point on curve $j=1$, which is is known as seed alignment \citep{srivESA}. We choose to perform these steps jointly through an iterative algorithm based on the square-root velocity formulation of \cite{srivESA}. This algorithm is analogous to that of solving \eqref{eq:ElasticOpt}; finding the optimal starting point is part of obtaining $\gamma^*$ in the elastic registration problem, the primary difference is that there is no other re-parameterization performed here.

Let $\mathbf{Y}^{(j)}$ 
be observed sample points associated with curve $j$ (perhaps zero-centered and re-scaled as above). We map $\mathbf{Y}^{(j)} \mapsto \mathbf{Q}^{(j)} = F(\mathbf{Y}^{(j)})$ using the square-root velocity transformation on the piecewise-linear curve $\beta_{\text{pw}}^{(j)}$ through points in $\mathbf{Y}^{(j)}$, i.e., $F\left(\beta_{\text{pw}}^{(j)}(t)\right)$ where $F$ is defined by \eqref{eq:SRVtran}.
For curve $j>2$, we transform $\left(\mathbf{Y}^{(j)}\right)^T \mapsto O_j \pi_j \left( \left(\mathbf{Y}^{(j)}\right) \right)^T$, \begin{equation}
(O_j, \pi_j) = \underset{O \in SO(2), \pi \in \mathcal{P}}{\text{argmin}} \  \vert \mathbf{Q}^{(1)}-O \pi\left( \mathbf{Q}^{(j)} \right) \vert \,,
\label{eq:rotseedopt}
\end{equation}
where $O$ is a $2\times2$ rotation matrix in $SO(2)$, and $\pi$ is a row-shift permutation of $\mathbf{Q}^{(j)}$ (in the set of row-shift permutations $\mathcal{P}$) which simply changes the ``starting point'' for curve $j$. Given a fixed row-shift permutation $\pi$, the optimal rotation is computed explicitly by a singular value decomposition, i.e., $\left(\pi\left(\mathbf{Q}^{(j)}\right)\right)^T \mathbf{Y}^{(1)} = V_j \Sigma_j U_j^T$ yields rotation matrix $O = U_j V_j^T$. We cycle through all possible row-shift permutations in $\mathcal{P}$, compute $O_j$,
and choose the permutation $\pi_j$ with rotation $O_j$ that minimizes the energy in \eqref{eq:rotseedopt}.

\section{Performance Metrics}
\label{sec:Metrics}

To quantify the performance of model fitting and prediction,
we discuss various metrics in this section. The first two compare the model predictions to the ground truth. The third is a useful way to quantify the overall uncertainty and the fourth is useful within the registration task. Their formulation and respective merits are also discussed for reference.

\subsection{IMSPE}
The integrated mean squared prediction error (IMSPE) \citep{plutowski1993cross} characterizes the average discrepancy between a fitted curve and the ground truth curve. It was first developed for density estimation:
\[
\text{IMSPE}(f,\hat{f}) = \mathbb{E} \|\hat{f}-f\|_{2}^{2}=\mathbb{E}  \int_{\mathbb{R}^d} (\hat{f}(x)-f(x))^{2}\ dx \,,
\]
where the $\hat{f}$ is the estimator of true univariate density $f$. When we integrate over the closed curve instead of $\mathbb{R}^d$, this can be taken as the average discrepancy between the GP mean and the corresponding ground truth closed curve. For finite sample of size $n$, we define the overall IMSPE between true closed curve $\mathbf{f}=(f_1,f_2)$ and fitted curve $\mathbf{\hat{f}}=(\hat{f}_1,\hat{f}_2)$ as:
\begin{equation}
\text{IMSPE}(\mathbf{f},\mathbf{\hat{f}}) = \sum_{d=1}^2 \mathbb{E}  \int_{\mathcal{S}^1} (\hat{f}_d(s)-f_d(s))^{2}\ ds \approx \frac{1}{n}\sum_{i=1}^{n} \sum_{d=1}^2 \left(\mathbb{E}\hat{f_d}(s_i)-f_d(s_i)\right)^2 \,.
\label{eq:IMSPE}
\end{equation}
which simply takes the average of the IMSPEs across coordinates. Higher IMSPE values indicate that on average the points on the fitted curve are far from the ground truth curve.

\subsection{ESD}
The elastic shape distance (ESD), rooted in elastic shape analysis, can be used to quantify shape differences between the fitted curve and the ground truth curve. 
Given a true closed curve contour $\mathbf{f}$
and an estimated contour $\mathbf{\hat{f}}$ based on a finite sample of points, with SRVFs $\mathbf{q}$
and $\mathbf{\hat{q}}$ respectively, the elastic shape
distance between the shape classes defined by the respective curves is given by: 
\begin{equation}
\text{ESD}(\mathbf{f},\mathbf{\hat{f}}) =\underset{O\in SO(2),\gamma\in\Gamma}{\text{min}}\cos^{-1}\Big(\langle\langle \mathbf{q}, O(\mathbf{\hat{q}}\circ\gamma)\sqrt{\dot{\gamma}}\rangle\rangle\Big) ,
\label{eq:ESD}
\end{equation}
where $\langle\langle\cdot,\cdot\rangle\rangle$ is the $\mathbb{L}^{2}$
inner product and $SO(2)$ is the special orthogonal group. Note that this is analogous to solving \eqref{eq:ElasticOpt}, and requires numerical optimization. This distance quantifies shape differences which are invariant to translation, rotation, scale, and re-parameterization of the curves being compared. Higher ESD values indicate that the fitted curve has a substantially different shape from the ground truth.

\subsection{IUEA\label{sec:IUEA}}
We define the integrated uncertainty ellipsoid area (IUEA) to quantify overall uncertainty in GP model fitting. This is motivated by uncertainty ellipsoid plots throughout the manuscript, where pointwise uncertainty is depicted by ellipses along the predictive mean curve, with major and minor axes determined by pointwise standard deviations for the two coordinate functions. Similar to IMSPE, we can average over all points on the fitted curve to obtain an overall measure of uncertainty.

Formally, let $\tilde{\sigma}_1$, $\tilde{\sigma}_2$ represent the predictive standard deviations for the fitted closed curve $\mathbf{\hat{f}}=(\hat{f}_1,\hat{f}_2)$. The following derivation is standard, but we cannot find it in the literature; thus, we present it for completeness. Note that at the arc-length parameter value $s$, coordinate pairs follow a bivariate normal distribution:
{\footnotesize
\begin{align*}
\left(\begin{array}{c}
f_{1}(s)\\
f_{2}(s)
\end{array}\right) & \sim \mathcal{N}_{2}\left(\left(\begin{array}{c}
\tilde{f}_{1}(s)\\
\tilde{f}_{2}(s)
\end{array}\right),\left(\begin{array}{cc}
\tilde{\sigma}_{1}^{2}(s) & \tilde{\rho}_{12}(s)\\
\tilde{\rho}_{21}(s) & \tilde{\sigma}_{2}^{2}(s)
\end{array}\right)\right), \quad \tilde{\rho}_{12}(s)=\tilde{\rho}_{21}(s) \,.
\end{align*}}
Therefore, by properties of normal distributions, we can write the event using a chi-squared quantile (with two degrees of freedom):
{\footnotesize
\begin{align*}
\mathbb{P}\left(\left[\left(\begin{array}{c}
f_{1}(s)\\
f_{2}(s)
\end{array}\right)-\left(\begin{array}{c}
\tilde{f}_{1}(s)\\
\tilde{f}_{2}(s)
\end{array}\right)\right]^{T}\left(\begin{array}{cc}
\tilde{\sigma}_{1}^{2}(s) & \tilde{\rho}_{12}(s)\\
\tilde{\rho}_{21}(s) & \tilde{\sigma}_{2}^{2}(s)
\end{array}\right)^{-1}\left[\left(\begin{array}{c}
f_{1}(s)\\
f_{2}(s)
\end{array}\right)-\left(\begin{array}{c}
\tilde{f}_{1}(s)\\
\tilde{f}_{2}(s)
\end{array}\right)\right]\leq\chi_{2,\alpha}^{2}\right) & =1-\alpha \,.
\end{align*}
}
From the Cholesky decomposition, 
\begin{align*}
\left(\begin{array}{cc}
\tilde{\sigma}_{1}^{2}(s) & \tilde{\rho}_{12}(s)\\
\tilde{\rho}_{21}(s) & \tilde{\sigma}_{2}^{2}(s)
\end{array}\right) & =\left(\begin{array}{cc}
\tilde{\sigma}_{1}(s) & 0\\
\frac{\tilde{\rho}_{21}(s)}{\tilde{\sigma}_{1}(s)} & \sqrt{\tilde{\sigma}_{2}^{2}(s)-\frac{\tilde{\rho}_{21}^{2}(s)}{\tilde{\sigma}_{1}^{2}(s)}}
\end{array}\right)\left(\begin{array}{cc}
\tilde{\sigma}_{1}(s) & \frac{\tilde{\rho}_{21}(s)}{\tilde{\sigma}_{1}(s)}\\
0 & \sqrt{\tilde{\sigma}_{2}^{2}(s)-\frac{\tilde{\rho}_{21}^{2}(s)}{\tilde{\sigma}_{1}^{2}(s)}}
\end{array}\right)=\bm{A}\bm{A}^{T} \,,
\end{align*}
we obtain the boundary equation for a $(1-\alpha)\%$ confidence ellipsoid of $(f_{1}(s),f_{2}(s))\in\mathbb{R}^{2}$:
\begin{align*}
\left(\begin{array}{cc}
\tilde{\sigma}_{1}(s) & 0\\
\frac{\tilde{\rho}_{21}(s)}{\tilde{\sigma}_{1}(s)} & \sqrt{\tilde{\sigma}_{2}^{2}(s)-\frac{\tilde{\rho}_{21}^{2}(s)}{\tilde{\sigma}_{1}^{2}(s)}}
\end{array}\right)^{-1}\left[\left(\begin{array}{c}
f_{1}(s)\\
f_{2}(s)
\end{array}\right)-\left(\begin{array}{c}
\tilde{f}_{1}(s)\\
\tilde{f}_{2}(s)
\end{array}\right)\right] & =\sqrt{\chi_{2,\alpha}^{2}} \,.
\end{align*}
The volume of this ellipsoid (i.e., $\left\Vert \frac{1}{\sqrt{\chi_{2,\alpha}^{2}}}\left(\bm{A}^{-1}\mathbf{f}-\bm{A}^{-1}\tilde{\mathbf{f}}\right)\right\Vert _{2}^{2}\leq1$) can
be computed as a scaled unit ball $\mathcal{B}^{2}$ in $\mathbb{R}^{2}$
($\text{vol }\mathcal{B}^{2}=\pi$) \citep{boyd2004convex}: 
\begin{align*}
\sqrt{\chi_{2,\alpha}^{2}}\det\left(\begin{array}{cc}
\tilde{\sigma}_{1}(s) & 0\\
\frac{\tilde{\rho}_{21}(s)}{\tilde{\sigma}_{1}(s)} & \sqrt{\tilde{\sigma}_{2}^{2}(s)-\frac{\tilde{\rho}_{21}^{2}(s)}{\tilde{\sigma}_{1}^{2}(s)}}
\end{array}\right)\cdot\text{vol }\mathcal{B}^{2} & =\pi\sqrt{\chi_{2,\alpha}^{2}}\cdot\left(\tilde{\sigma}_{1}(s)\sqrt{\tilde{\sigma}_{2}^{2}(s)-\frac{\tilde{\rho}_{21}^{2}(s)}{\tilde{\sigma}_{1}^{2}(s)}}\right)\\
 & =\pi\sqrt{\chi_{2,\alpha}^{2}}\cdot\tilde{\sigma}_{1}(s)\tilde{\sigma}_{2}(s)\text{ when }\tilde{\rho}_{21}^{2}(s)=0 \,.
\end{align*}
Therefore, we define the following quantity as our metric:
\begin{equation}
\text{IUEA}(\mathbf{\hat{f}}) = \pi \int_{\mathcal{S}^1} \left(\tilde{\sigma}_{1}(s)\sqrt{\tilde{\sigma}_{2}^{2}(s)-\frac{\tilde{\rho}_{21}^{2}(s)}{\tilde{\sigma}_{1}^{2}(s)}}\right) \ ds \approx \frac{\pi}{n} \sum_{i=1}^n \left(\tilde{\sigma}_{1}(\bar{s})\sqrt{\tilde{\sigma}_{2}^{2}(\textbf{s})-\frac{\tilde{\rho}_{21}^{2}(\bar{s})}{\tilde{\sigma}_{1}^{2}(\bar{s})}}\right) \,, 
\label{eq:IUEA}
\end{equation}
where $\bar{s}\in\mathcal{S}^1$ is determined by the intermediate value theorem on $\mathcal{S}^1$. 

Note that this only accounts for between-coordinate dependence. However, from its derivation, we can see how correlation modeling for between-curve dependence, as described in the main text, helps to reduce uncertainty in model prediction. Also, we note that in the case of separate GPs for each coordinate, the correlation $\tilde{\rho}_{21}(s)=0$ for any $s \in \mathcal{D}$, and the IUEA is proportional to the average product of pointwise standard deviations in the two coordinate directions. In the extreme case where $\tilde{\sigma}_{1}^{2}(s)=\tilde{\sigma}_{2}^{2}(s)=\tilde{\rho}_{12}(s)=\tilde{\rho}_{21}(s)$, the ellipsoid degenerates and its volume is zero, leaving us with the usual notion of the width of a confidence interval. 

\subsection{GW}
As a metric for registration, we use the Gromov-Wasserstein (GW) distance, which considers both overall uncertainty and the  difficulty in alignment of two point sets (with multiplicity). While not directly suitable for comparing two curves, it is widely used for comparing ``how close'' two point clouds (as two degenerate distributions) are. Formally, the $L_2$ Gromov-Wasserstein distance \citep{solomon2021geometry, chowdhury2019gromov} between two probability measures $\mu$  and $\nu$  defined on the same space $M$ is defined as: 
\begin{align}
\text{GW}(\mu ,\nu )\coloneqq\inf _{\gamma \in \Gamma (\mu ,\nu )}\int _{M\times M}\|x-y\|^2_2\, \ \mathrm {d} \gamma (x,y) = \inf _{\gamma \in \Gamma (\mu ,\nu )}\mathbb{E}_{\gamma} \|X-Y\|^2_2 \,,
\end{align}
where $\Gamma (\mu,\nu)$ denotes the collection of all measures on $M\times M$ with marginals $\mu$  and $\nu$  on the first and second component measures. When computing the GW distance between two finite, discrete datasets (as sampled from curves), $\mathbf{f},\hat{\mathbf{f}}$, finite samples are treated as empirical measures consisting of one-point indicators.

We observe that this definition of GW shares similarities to IMSPE, but the distance is optimized over all possible bijective correspondences between point sets (rather than assuming a fixed correspondence). 
GW also shares similarities to ESD, but considers all possible bijections between point sets, as compared to ESD searching over all curve reparameterizations. We mainly use the GW metric for evaluating the registration quality in the main text. Computation of the GW metric is performed in Python using the \texttt{POT: Python Optimal Transport} module \citep{flamary2021pot}, version 0.8.2.

\section{Practical Numerical Issues}
\label{sec:numerical_issues}
\subsection{Ordering and Labeling of Sample Points}
\label{subsec:order}
This section is dedicated to prove that our model is invariant under data formatting, and the sensitivity of our algorithm.
Shape data are assumed to come in the following form:
$\{(y_{11},y_{12}),(y_{21},y_{22}),\cdots,(y_{n1},y_{n2})\}$, where
each sample point is drawn from an underlying curve. The stored $n\times 2$ array  also conveys the order of observed points by the ordering of the $n$ rows in this array.

Since we are considering fitting closed curves, it is important that our proposed model is invariant (or robust) to cyclic permutations (or re-labelings) of the ordering of these observed sample points (i.e., the ordering of the rows in the array), which shifts the point labeled as the ``starting point'' (i.e, the ``seed''). To study this, we propose and perform the following numerical experiment: consider a single camel curve from the MPEG-7 dataset, which we resample via linear interpolation to 1000 points to treat as the ``true'' camel curve. For a fixed number of observed sample points $n$,
we perform the following steps randomly 1000 times:
\begin{enumerate}
    \item Randomly select $n = \{5,20,60\}$ unique points from the camel curve to form the observed sample point matrix $\mathbf{Y}$.
    \item For all $m\leq n!$ cyclic permutations of $\mathbf{Y}$, we obtain a different $n\times 2$ array with a different starting point by applying the permutation. Then, we fit the multiple-output GP model described in Section \ref{subsec:SingleCurveMOGP}, using the periodic kernel with coregionalization. Record kernel hyperparameter estimates for each cyclic permutation: $\sigma^2, \rho, D_{11}, D_{12}, D_{22}$, where $D_{ij}$ denotes the $ij$-entry of the coregionalization matrix $D$. For comparison, we also fit the same model using the equivalent non-periodic kernel with coregionalization, recording these estimated hyperparameters as well.
    \item For each kernel hyperparameter, we compute the standard deviation of the $m$ estimated values associated with the $m$ different cyclic permuted arrays.
\end{enumerate}

\begin{figure}[t]
\centering
\begin{tabular}{ccc}
\hline
\includegraphics[width=0.3\textwidth]{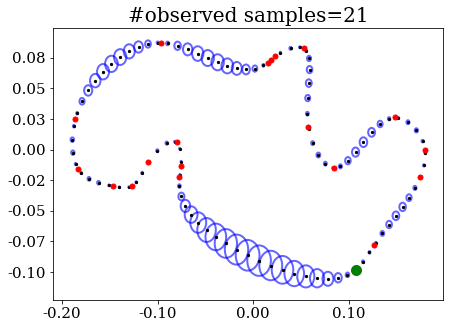} & \includegraphics[width=0.3\textwidth]{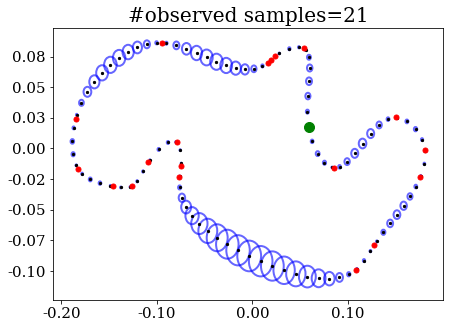} &
\includegraphics[width=0.31\textwidth]{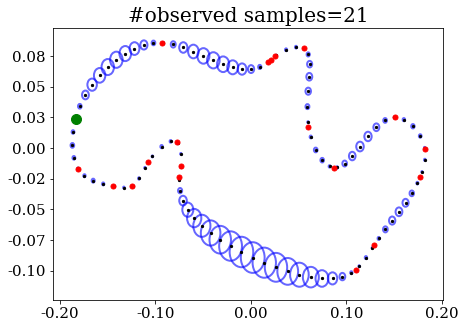}\\
\hline
\end{tabular}
\caption{\textbf{GP model fits under cyclic permutation.} Model fits for a camel curve from MPEG-7, observed at 20 randomly selected points, under a multiple-output GP with a periodic Matern 3/2 kernel. The three plots show fits for the same observed points under three different cyclic permutations of the starting point (in green).}
\label{fig:perm_sp} 
\end{figure}

Figure \ref{fig:perm_sp} shows an example of fits for the same $n=20$ observed camel sample points using $m=3$ different starting points (i.e., three different cyclic permutations). 
Though fits are visually identical (both in predictive mean and uncertainty ellipsoids) across cyclic permutations of the same observed sample point matrix $\mathbf{Y}$, they arise from varying estimates of kernel hyperparameters, as the supplied arc-length parameter values will differ for each permutation.  When there are dense sample points, the total length $\ell$ can be estimated relatively accurately according to Algorithms \ref{alg:xy_to_arc_param} and \ref{alg:arc_to_xy_param}, and our constraint $\tau<\ell$ works better in the sense that the parameter estimate is more concentrated around the true value.

As the number of observed sample points increases, variation in kernel hyperparameter values tend to zero according to Corollary \ref{cor:arc-param-bound} under the periodic kernel. The left plots in Figure \ref{fig:perm_sp2} confirm this, as density estimates of sample standard deviations for all kernel hyperparameters is generally more concentrated around zero for denser samplings (i.e., for larger values of $n$). The only exception is in estimating the length scale hyperparameter, which we suspect is due to its constraint relative to the period (fixed to the estimated total curve length), and the potential shift from interpolating to smoothing regimes of the GP model. The right plots in the figure show the corresponding standard deviations for the equivalent non-periodic kernel, where (i) standard deviations are generally much larger than for periodic counterparts, and (ii) denser samples do not necessarily lead to reduced variation in these hyperparameter estimates.
%
%

\begingroup
\renewcommand{\arraystretch}{0.5} 
\begin{figure}[ht!]
\centering
\begin{tabular}{c|c}
\hline
$\sigma^2$ & $\rho$ \\
\hline
\includegraphics[width=0.45\textwidth]{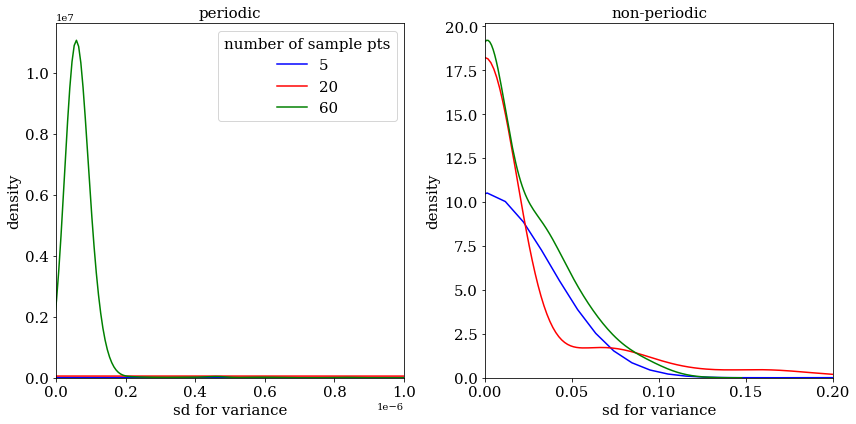} & \includegraphics[width=0.45\textwidth]{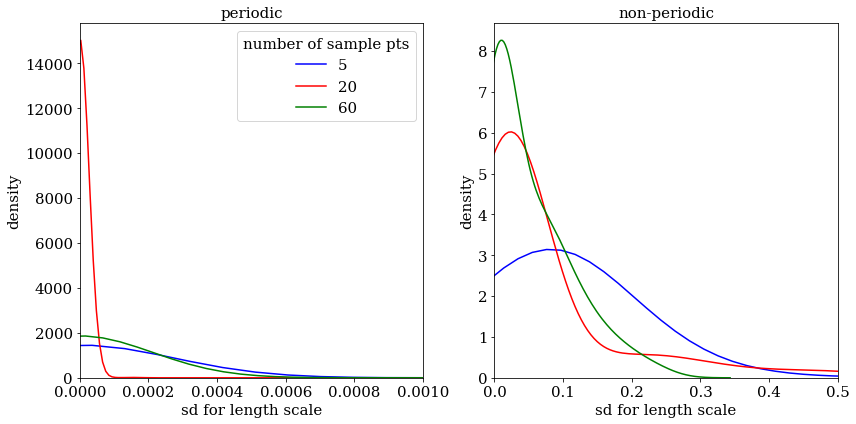}\\
\hline
$D_{11}$ & $D_{12}$\\
\hline
\includegraphics[width=0.45\textwidth]{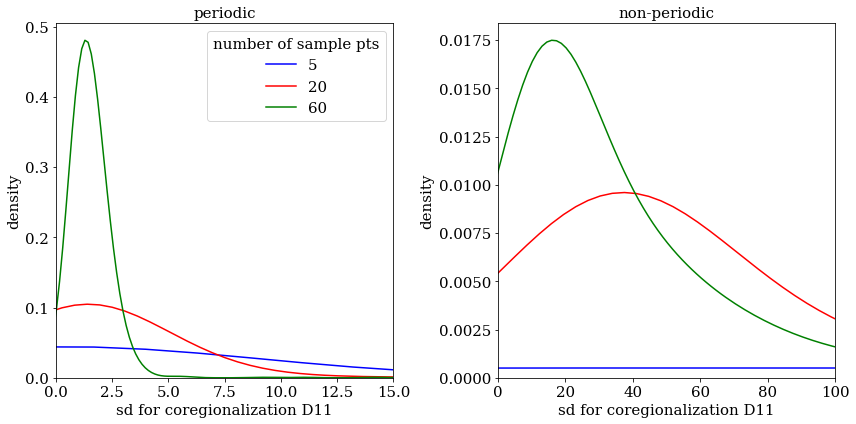} & \includegraphics[width=0.45\textwidth]{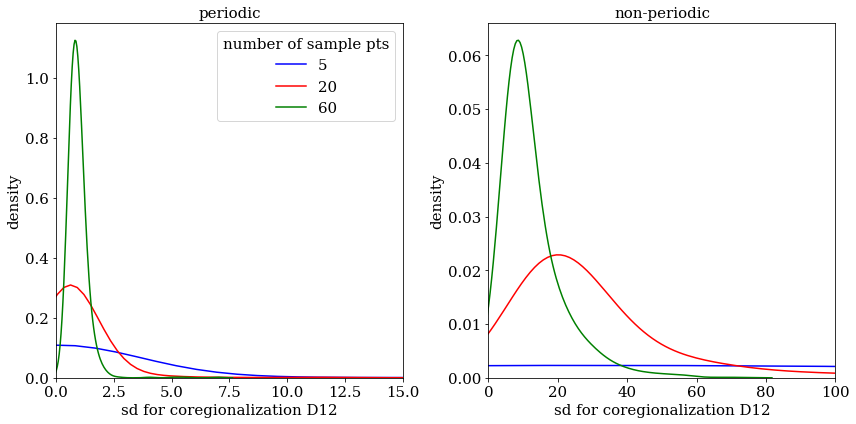} \\
\hline
\end{tabular}
\begin{tabular}{c}
\hline
$D_{22}$\\
\hline
\includegraphics[width=0.45\textwidth]{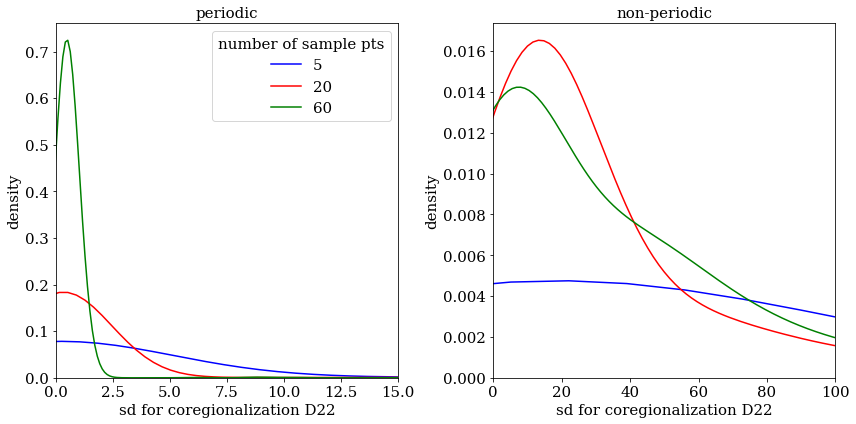}\\
\hline
\end{tabular}
\caption{\textbf{GP model hyperparameter estimates under cyclic permutation.} Kernel density estimates of standard deviations for kernel hyperparameter estimates (top row: variance, length scale; bottom two rows: coregionalization matrix entries $D_{11}, D_{12}, D_{22}$) under all $n$ cyclic permutations of the starting point for 1000 randomly selected sets of $n$ observed sample points on the camel in Figure \ref{fig:perm_sp}. Blue, red, green are for $n=5,20,60$, respectively. Densities are obtained using Gaussian kernels, with common bandwidth across all estimates for a single hyperparameter.
}
\label{fig:perm_sp2} 
\end{figure}
\endgroup

In addition, we also point out that when modeling multiple curves, the encoding method of each curve only affects estimates of kernel hyperparameters. For instance, labeling curves from two different groups as 1 and 2 or as 1 and 3 will lead to the same prediction mean obtained from fitting the GP model. This means that the curve label encoding does not affect the dependence characterized by the between-curve covariance kernel. By default, we use integer labeling (i.e., 0, 1, 2,...) for different outputs, to represent the coordinate function labels and/or sub-population labels. A detailed discussion of how categorical labeling can be modeled in GP regression is in \cite{luo_hybrid_2022}. 

\subsection{Numerical Issues in GP Regression}
\label{sec:general_numericals}

Numerical optimization for maximum likelihood estimation remains partially open at the time this paper is written (especially in high-dimensional domains), including with respect to interpolation error \citep{zaytsev2018interpolation} and simple parameter estimation \citep{basak2021numerical}.  

As pointed out in Section 5.4 of \cite{williams2006gaussian}, when we fit a GP regression model by maximizing the joint likelihood, the multiple local maxima correspond to multiple hyperparameter estimates. Different hyperparameter estimates lead to different interpretations of the same data set. The specific case where the noise variance estimate is small compared to the magnitude of the mean (i.e., large signal-to-noise ratio) is known as kriging \citep{cressie2015statistics}. Kriging shows strong interpolation behavior in that the predictive mean curve from GP model fitting interpolates all data points. On the other hand, when the noise variance is large compared to the magnitude of the mean (i.e., small signal-to-noise ratio), fits exhibit more smoothing behavior. We focus on the kriging aspect of GP regression within this manuscript, and point out that empirically, highly parameterized kernels (e.g., the separable kernel for multiple-output GPs as in Section \ref{subsec:SingleCurveMOGP}) introduce more local maxima in the joint likelihood. 

Gradient-based optimization methods, like L-BFGS-B, for maximizing joint likelihood generally involves specifying a step tolerance, since the GP likelihood can be numerically unstable under a multi-level periodic kernel. Periodicity introduces multiple local optima in the
likelihood function, rendering the optimization problem difficult to
solve. Even if the period is known, we recommend
L-BFGS-B optimization using multiple initial points (also known as ``restarts (of
inner loop)'' in the context of GP optimization).
 Although methods based on auto-differentiation (e.g., tensorflow with Adam)
can improve numerical stability,  
it is more common to execute L-BFGS-B with multiple restart
points.

Simulated annealing, as a global optimizer, seems to be more effective without the need for restarts. However, this approach can be computationally-intensive for GP fitting, especially when there are a quite a few kernel parameters to be estimated.
However, we do note that simulated annealing is important for the landmark estimation task of Section \ref{subsec:lmk}, as the energy function being optimized often has multiple modes. As stated above, landmark estimation using standard gradient-based optimization approaches is not recommended due to its severe dependence on the initial point specified.
The variational GP approach (with natural gradient) seems to work
much less effectively compared to the aforementioned direct solvers based on L-BFGS-B and simulated annealing. 

\subsection{Restricted Kernel Hyperparameter Ranges\label{sec:restricted parameter range}}
As mentioned above, the GP model typically requires some constraints to overcome numerical instabilities during optimization. If one specifies a periodic covariance kernel to fit a closed curve, a practical relationship that bounds the relationship between the length scale and period hyperparameters is discussed 
in Section \ref{subsec:Closed}. In addition, we claimed that the period hyperparameter $\tau$ should be reasonably close to the total curve length $\ell$, based on the proof in Appendix \ref{sec:proof of main thm}. This echoes the observation from our numerical experiments in Figure \ref{fig:single_shape_different_periodicities} that the period $\tau$ should be reasonably close to the total length of the curve $\ell$. Choosing $\tau \ll \ell$ results in poor fits, while $\tau \gg \ell$ typically yields over-smoothed fits.

\begingroup
\renewcommand{\arraystretch}{0.5} %
\begin{figure}[t]
\centering \begin{tabular}{ccc}
\hline
$\text{PM}=0.50$ & $\text{PM}=1.00$ & $\text{PM}=1.50$\\
\hline
\includegraphics[width=0.3\textwidth]{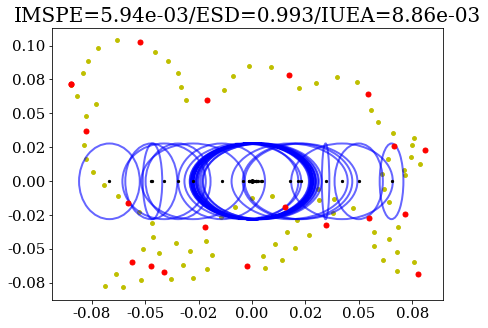} & \includegraphics[width=0.3\textwidth]{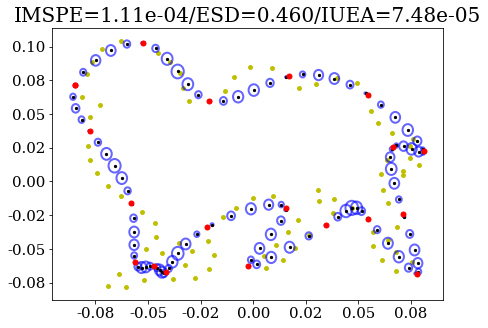} & \includegraphics[width=0.3\textwidth]{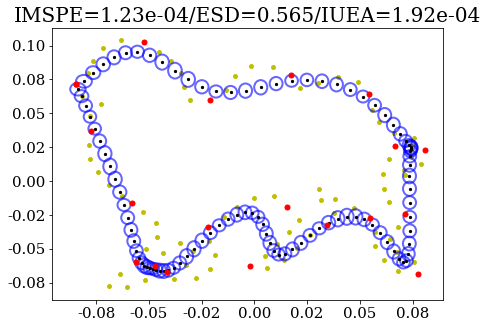}\\
\hline
\end{tabular}\caption{\textbf{Effect of periodicity specification on GP model fits.} Camel curve from MPEG-7, observed at 20 points. The period hyperparameter is fixed to be the estimated arc-length multiplied by a scalar denoted PM, and length scale is constrained accordingly. 
} 
\label{fig:single_shape_different_periodicities}
\end{figure}
\endgroup

For certain smooth kernels, such as the periodic radial basis function kernel, we have observed that it may be necessary to impose an additional constraint that the noise variance is small when fitting the model to obtain a kriging-type fit. as discussed in Supplementary Material \ref{sec:general_numericals}. Fixing this value to be within $(10^{-6},10^{-4})$ tends to resolve the issue in these cases when curves are scaled to unit length. 

If one does not use a periodic covariance kernel to fit a closed curve, we observe inconsistency especially when the arc-length parameter exceeds the estimated total curve length. Even if we want to consider parameters only within a period (arc-length parameter must be less than total curve length, which is unknown), further restrictions of kernel hyperparameters are likely necessary. The range to which hyperparameters are restricted will depend on the scale of the black-box function values. In practice, we assume that both the $x$ and $y$ coordinate function values fall in a finite range, say $\pm10$ after preprocessing. Accordingly, we 
restrict the Gaussian noise variance to $(10^{-6},10^{-4})$ to force the GP regression to interpolate.

\subsection{Adding White Noise or Constant Kernels}
In all examples throughout the paper, the full kernel used is typically the sum of a periodic covariance kernel with a white noise or constant kernel with very small length scale. This is equivalent to adding a diagonal perturbation matrix, and helps resolve numerical issues for inversion of the covariance matrix based on the Cholesky decomposition. In general, adding a constant or white noise kernel with variance $10^{-3}$ is sufficient to avoid these issues. 
Based on experimentation in GP fitting for closed curves, we have found that adding the constant kernel typically resolves these issues more generally than the white noise kernel. 

\subsection{Mappings between Curve and SRVF}
\begin{figure}[t]
\centering
\begin{tabular}{cc}
\hline
\includegraphics[width=0.45\textwidth]{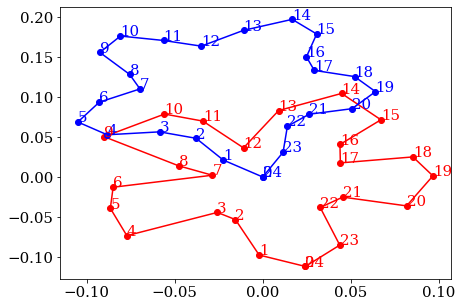}&\includegraphics[width=0.45\textwidth]{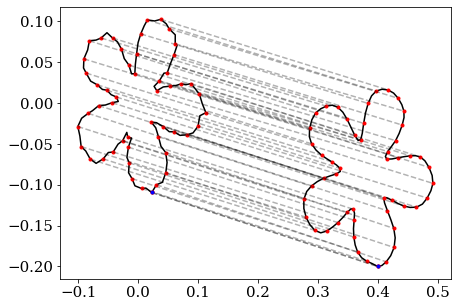}\\
\hline
\end{tabular}
\caption{\label{fig:q2curve_curve2q}\textbf{Numerical issues with mapping between curve and SRVF in \texttt{fdasrsf} module.} (Left) Piecewise-linear curve (red) formed by 25 observed sample points (labeled by their respective ordering) drawn from a flower curve in MPEG-7, and resulting curve (blue) after transforming to its SRVF and then back to a curve. Points are annotated by index. (Right) Correspondence estimated by self-registration of flower curve observed at 101 sample points, perturbed by Gaussian noise, to itself.} 

\end{figure}

We have observed that mapping between curve and SRVF can result in some numerical errors which are exacerbated for curves which either have a relatively small number of observed sample points, or are observed with noise. In particular, the left panel shows a flower with 5 petals from the MPEG-7 dataset, with the red curve showing the original piecewise-linear curve formed by 25 observed points. Using version 2.2.9 of the \texttt{fdasrsf} module in Python for elastic shape analysis \citep{fdasrsf}, we simply apply the function which maps this curve to its SRVF, and then apply the function which maps this SRVF back to its curve representation. The resulting curve, shown in blue, has a drastically different shape than the original curve. In particular, it appears that this mapping tends to inaccurately reconstruct the points of high curvature between petals (e.g., points labeled 2 and 3), essentially smoothing out the resulting curve. 

The right panel shows the correspondence induced by self-registration of the same flower with 5 petals, sampled at 101 points, and perturbed by Gaussian noise with standard deviation 0.008. By self-registration, we mean that we register this noisy curve to itself, with the goal of estimating a re-parameterization function that is the identity (up to small numerical differences) re-parameterization (i.e., a straight line with unit slope). While this re-parameterization function estimated is indeed the identity, the mapping between SRVF and curve representation after this warping is applied has the effect of "smoothing out" the curve. These issues about the intrinsic weakness in the SRVF conversion should be taken into account when considering tasks involving elastic shape analysis using GP regression fits.

\spacingset{1}
\bibliographystyle{chicago}
{
\bibliography{SHAPEFUN}
\spacingset{1}
}
\end{document}